\newtheorem{thm}{Theorem}[section]
\newtheorem{lem}{Lemma}[section]
\numberwithin{equation}{section}
\begin{document}
\makeatletter

\begin{center}
\large{\bf Critical Bach Size Minimizes Stochastic First-Order Oracle Complexity of Deep Learning Optimizer using Hyperparameters Close to One}\\
\small{This work was supported by JSPS KAKENHI Grant Number 21K11773.}
\end{center}\vspace{3mm}

\begin{center}
\textsc{Hideaki Iiduka}\\
Department of Computer Science, 
Meiji University,
1-1-1 Higashimita, Tama-ku, Kawasaki-shi, Kanagawa 214-8571 Japan. 
(iiduka@cs.meiji.ac.jp)
\end{center}

\vspace{2mm}

\footnotesize{
\noindent\begin{minipage}{14cm}
{\bf Abstract:}
Practical results have shown that deep learning optimizers using small constant learning rates, hyperparameters close to one, and large batch sizes can find the model parameters of deep neural networks that minimize the loss functions. 
We first show theoretical evidence that the momentum method (Momentum) and adaptive moment estimation (Adam) perform well in the sense that the upper bound of the theoretical performance measure is small with a small constant learning rate, hyperparameters close to one, and a large batch size. 
Next, we show that there exists a batch size called the critical batch size minimizing the stochastic first-order oracle (SFO) complexity,
which is the stochastic gradient computation cost, and that SFO complexity increases once the batch size exceeds the critical batch size.
Finally, we provide numerical results that support our theoretical results. 
That is, the numerical results indicate that Adam using a small constant learning rate, hyperparameters close to one, and the critical batch size minimizing SFO complexity has faster convergence than Momentum and stochastic gradient descent (SGD).  
\end{minipage}
 \\[5mm]

\noindent{\bf Keywords:} {Adam, adaptive method, batch size, critical batch size, hyperparameters, learning rate, nonconvex optimization}\\
%\noindent{\bf Mathematics Subject Classification:} {65K05, 90C15, 90C25, 90C26}

\hbox to14cm{\hrulefill}\par

%%%%%%%%%%%%%%%%%%%%%%%%%%%%%%%%%%%%%%%%%%%%%%%%%%%%%%%%%%%%%%%%%%%%%%%
%%%%%%%%%%%%%

\section{Introduction}\label{sec:1}
\subsection{Background}\label{subsec:1.1}
Useful deep learning optimizers have been proposed to find the model parameters of the deep neural networks that minimize loss functions called the expected risk and empirical risk, such as
stochastic gradient descent (SGD) \cite{robb1951,zinkevich2003,nem2009,gha2012,gha2013},
momentum methods \cite{polyak1964,nest1983},
and adaptive methods.
The various adaptive methods include Adaptive Gradient (AdaGrad) \cite{adagrad}, Root Mean Square Propagation (RMSProp) \cite{rmsprop},
Adaptive Moment Estimation (Adam) \cite{adam}, Adaptive Mean Square Gradient (AMSGrad) \cite{reddi2018}, Yogi \cite{NEURIPS2018_90365351},
Adam with decoupled weight decay (AdamW) \cite{loshchilov2018decoupled}, and AdaBelief (named for adapting stepsizes by the belief in observed gradients) \cite{ada}.

%Convergence and convergence rate analyses of deep learning optimizers have been widely studied for convex optimization \cite{zin2010,adam,reddi2018,luo2019,dun2020}.
%Meanwhile, theoretical investigation of deep learning optimizers for nonconvex optimization is needed so that these optimizers can be put into practice for nonconvex optimization in deep learning \cite{kxu2015,ar2017,vas2017}.

Theoretical analyses of adaptive methods for nonconvex optimization were presented in \cite{NEURIPS2018_90365351,cvpr2019,chen2019,opt2020,ada,chen2020_1,iiduka2021} (see \cite{feh2020,chen2020,sca2020,loizou2021} for convergence analyses of SGD for nonconvex optimization).
A particularly interesting feature of adaptive methods is the use of hyperparameters, denoted by $\beta_1$ and $\beta_2$, that can be set to influence the method performance.
Almost all of the previous results for convergence analyses of adaptive methods showed that using $\beta_1$ and $\beta_2$ close to $0$ makes the upper bound of the performance measure $(1/K) \sum_{k=1}^K \mathbb{E}[\|\nabla f(\bm{\theta}_k)\|^2]$ small, where $\nabla f$ is the gradient of a loss function $f \colon \mathbb{R}^d \to \mathbb{R}$, $(\bm{\theta}_k)_{k=1}^K$ is the sequence generated by an optimizer, and $K$ is the number of steps. 
%For example, an upper bound of the performance measure 
%$(1/K) \sum_{k=1}^K \mathbb{E}[\|\nabla f(\bm{\theta}_k)\|^2]$
%generated by generalized Adam \cite{chen2019} using a decaying learning rate ($\alpha_k = 1/\sqrt{k}$)
%is small when $\beta_1$ is small \cite[Theorem 3.1, Section 6.2.2]{chen2019}, i.e., using $\beta_1 \approx 0$ is desirable for generalized Adam.  

Meanwhile, practical results for adaptive methods were presented in \cite{adam,reddi2018,NEURIPS2018_90365351,cvpr2019,chen2019,ada,chen2020_1}.
These studies have shown that using, for example,  
$\beta_1 \in \{ 0.9, 0.99\}$ and $\beta_2 \in \{ 0.99, 0.999 \}$
provides superior performance for training deep neural networks. 
The practically useful $\beta_1$ and $\beta_2$ are each close to $1$, whereas in contrast, the theoretical results (the above paragraph) show that using $\beta_1$ and $\beta_2$ close to $0$ makes the upper bounds of the performance measures small. 

The practical performance of a deep learning optimizer strongly depends on the batch size.
In \cite{l.2018dont}, it was numerically shown that using an enormous batch size leads to a reduction in the number of parameter updates and model training time. 
The theoretical results in \cite{NEURIPS2018_90365351} showed that using large batch sizes makes the upper bound of $\mathbb{E}[\|\nabla f (\bm{\theta}_k)\|^2]$ of an adaptive method small.  
%Convergence analyses of SGD depending on the batch size were presented in \cite{chen2020}. 
%In particular, 
Convergence analyses of SGD in \cite{chen2020} indicated that
running SGD with a decaying learning rate and large batch size for sufficiently many steps
leads to convergence to a local minimizer of a loss function.
Accordingly, the practical results for large batch sizes match the theoretical ones.

In \cite{shallue2019,zhang2019}, it was studied how increasing the batch size affects the performances of deep learning optimizers. 
In both studies, it was numerically shown that increasing batch size tends to decrease the number of steps $K$ needed for training deep neural networks,  
but with diminishing returns. 
Moreover, it was shown that momentum methods can exploit larger batches than SGD \cite{shallue2019}, 
and that K-FAC and Adam can exploit larger batches than momentum methods \cite{zhang2019}.
Thus, it was shown that momentum and adaptive methods can significantly reduce the number of steps $K$ needed for training deep neural networks \cite[Figure 4]{shallue2019}, \cite[Figure 5]{zhang2019}.

\subsection{Motivation}\label{subsec:1.2}
\subsubsection{Hyperparameters close to one and constant learning rate}
As described in Section \ref{subsec:1.1}, the practically useful $\beta_1$ and $\beta_2$ are each close to $1$, whereas in contrast, the theoretical results show that using $\beta_1$ and $\beta_2$ close to $0$ makes the upper bounds of the performance measures small. Hence, there is a gap between theory ($\beta_1, \beta_2 \approx 0$) and practice ($\beta_1, \beta_2 \approx 1$) for adaptive methods. 
As a consequence, the first motivation of this paper is to bridge this gap. 

Since using small constant learning rates (e.g., $10^{-2}$, $10^{-3}$, and $10^{-4}$) is robust for training deep neural networks
\cite{adam,reddi2018,NEURIPS2018_90365351,cvpr2019,chen2019,ada,chen2020_1}, we focus on using a small constant learning rate $\alpha$.
Here, we note that using a learning rate depending on the Lipschitz constant $L$ of the gradient of a loss function $f$ would be unrealistic. 
This is because computing the Lipschitz constant $L$ is NP-hard \cite[Theorem 2]{NEURIPS2018_d54e99a6}.
%Convergence analyses of adaptive methods using learning rates that do not depend on $L$ were presented in \cite{cvpr2019,chen2019,ada,opt2020,iiduka2021,chen2020_1}. 

\subsubsection{Critical batch size}\label{subsec:1.2.2}
%\begin{figure*}[htbp]
%\begin{tabular}{cc}
%\begin{minipage}[t]{0.45\hsize}
%\centering
%\includegraphics[width=1\textwidth]{rational.pdf}
%\caption{Relationship between batch size $b$ and steps $K$ needed for training deep neural networks ($b^\star$ denotes critical batch size)}
%\label{fig_K}
%\end{minipage} &
%\begin{minipage}[t]{0.45\hsize}
%\centering
%\includegraphics[width=1\textwidth]{rational2.pdf}
%\caption{Relationship between batch size $b$ and SFO complexity $Kb$  ($b^\star$ is a global minimizer of $Kb$)}
%\label{fig_Kb}
%\end{minipage}
%\end{tabular}
%\end{figure*}
The second motivation of this paper is to clarify theoretically the relationship between the diminishing returns reported in \cite{shallue2019,zhang2019} and batch size.
Numerical evaluations in \cite{shallue2019,zhang2019} have definitively shown that, for deep learning optimizers, the number of steps $K$ needed to train a deep neural network halves for each doubling of the batch size $b$ and that there is a region of diminishing returns beyond the {\em critical batch size} $b^\star$.
This implies that there is a positive number $C$ such that 
\begin{align}\label{fact}
Kb \approx 2^C \text{ for } b \leq b^\star \text{ and }
Kb \geq 2^C \text{ for } b \geq b^\star,
\end{align}
where $K$ and $b$ are defined for $i,j\in\mathbb{N}$ by $K = 2^i$ and $b = 2^j$
(For example, Figure \ref{fig1} in Section \ref{sec:4} shows $C \approx 20$ and $b^\star \approx 2^{11}$ for Adam used to train ResNet-20 on the CIFAR-10 dataset).
We define the {\em stochastic first-order oracle (SFO) complexity} of a deep learning optimizer as $Kb$ on the basis of the number of steps $K$ needed for training the deep neural network and the batch size $b$ used in the optimizer. Let $b^\star$ be a critical batch size such that there are diminishing returns for all batch sizes beyond $b^\star$, as asserted in \cite{shallue2019,zhang2019}. 
This fact, expressed in (\ref{fact}), implies that, while SFO complexity $Kb$ initially almost does not change (i.e., $K$ halves for each doubling of $b$), $Kb$ is minimized at critical batch size $b^\star$, and there are diminishing returns once the batch size exceeds $b^\star$.

\subsection{Contribution}\label{subsec:1.3}
Our results are summarized in Table \ref{table:1}.
Our goal is to find a local minimizer of a loss function $f$ over $\mathbb{R}^d$, i.e., a stationary point $\bm{\theta}^\star \in \mathbb{R}^d$ satisfying 
$\nabla f(\bm{\theta}^\star) = \bm{0}$,
which is equivalent to the variational inequality (VI) defined for all $\bm{\theta} \in \mathbb{R}^d$ by 
$\nabla f(\bm{\theta}^\star)^\top (\bm{\theta}^\star - \bm{\theta}) \leq 0$.
Hence, we use an $\epsilon$-approximation $\mathrm{VI}(K,\bm{\theta}) \leq \epsilon$ (Table \ref{table:1})
as the performance measure of the sequence $(\bm{\theta}_k)_{k\in\mathbb{N}}$ generated by a deep learning optimizer,
where $\epsilon > 0$ is the precision. 
%and $K$ denotes the number of steps of a deep learning optimizer. 

\begin{table}[ht]
\caption{Relationship between batch size $b$ and the number of steps $K$ to achieve an $\epsilon$-approximation of an optimizer using a constant learning rate $\alpha$ and hyperparameters $\beta_1, \beta_2$. 
The critical batch size $b^\star$ minimizes SFO complexity $Kb$
($G$, $\sigma^2$, $M$, and $v_*$ are positive constants, 
$D(\bm{\theta})$ is a positive real number depending on $\bm{\theta} \in \mathbb{R}^d$, and 
$h$ defined by (\ref{h}) is monotone decreasing for $\beta_1$)}\label{table:1}
\centering
\begin{tabular}{c||c|c|c}
\bottomrule
Optimizer & SGD  & Momentum  & Adam     \\
\hline
$C_1$ 
& $\displaystyle{\frac{\mathbb{E}[\|\bm{\theta}_1 - \bm{\theta} \|^2]}{2\alpha}}$      
& $\displaystyle{\frac{\mathbb{E}[\|\bm{\theta}_1 - \bm{\theta} \|^2]}{2\alpha \beta_1}}$      
& $\displaystyle{\frac{d {D}(\bm{\theta}) \sqrt{M}}{2\alpha \beta_1 \sqrt{1-\beta_2}}}$      \\
\hline
$C_2$ 
& $\displaystyle{\frac{\sigma^2 \alpha}{2}}$      
& $\displaystyle{\frac{\sigma^2 \alpha}{2 \beta_1}}$      
& $\displaystyle{\frac{\sigma^2 \alpha}{2 \sqrt{v_*}\beta_1 (1-\beta_1)}}$     \\
\hline
$C_3$ 
& $\displaystyle{\frac{G^2 \alpha}{2}}$      
& $\displaystyle{\frac{G^2 \alpha}{2\beta_1}} + h(\beta_1)$      
& $\displaystyle{\frac{G^2 \alpha}{2\sqrt{v_*}\beta_1(1-\beta_1)}} + h(\beta_1)$     \\
\hline
Upper Bound of VI 
& \multicolumn{3}{c}{$\displaystyle{\mathrm{VI}(K,\bm{\theta}) := \frac{1}{K} \sum_{k=1}^K \mathbb{E}\left[\nabla f(\bm{\theta}_k)^\top (\bm{\theta}_k - \bm{\theta}) \right] \leq \frac{C_1}{K} + \frac{C_2}{b} + C_3 = \epsilon}$}\\
\hline
Steps $K$ and SFO $Kb$ 
& \multicolumn{3}{c}{$\displaystyle{K = \frac{C_1 b}{(\epsilon - C_3)b - C_2}}$ \text{ } $\displaystyle{Kb = \frac{C_1 b^2}{(\epsilon - C_3)b - C_2}}$} \\
\hline
%SFO Complexity 
%& \multicolumn{3}{c}{$\displaystyle{Kb = \frac{C_1 b^2}{(\epsilon - C_3)b - C_2}}$} \\
%\hline
%Minimum SFO 
%& \multicolumn{3}{c}{$\displaystyle{K b^\star = \frac{2 C_1 C_2}{(\epsilon - C_3)^2}}$} \\
%\hline
Critical Batch 
& \multicolumn{3}{c}{$\displaystyle{b^\star = \frac{2 C_2}{\epsilon - C_3}}$} \\
%\hline
%Lower Critical Batch
%& $\displaystyle{b^\star > \frac{\sigma_{\mathrm{S}}^2 \alpha}{\epsilon}}$       
%& $\displaystyle{b^\star > \frac{\sigma_{\mathrm{M}}^2 \alpha}{\beta_1^2 \epsilon}}$      
%& $\displaystyle{b^\star > \frac{\sigma_{\mathrm{A}}^2 \alpha}{\sqrt{v_*}\beta_1(1-\beta_1) \epsilon}}$\\
\toprule    
\end{tabular}
\end{table}

\subsubsection{Advantage of setting a small constant learning rate and hyperparameters close to one}
We can show that the upper bound $C_1/K + C_2/b + C_3$ of $\mathrm{VI}(K,\bm{\theta})$ becomes small when $\alpha$ is small, $\beta_1$ and $\beta_2$ are close to $1$, and $K$ is large. 
This implies that Momentum and Adam perform well when $\alpha$ is small and $\beta_1$ and $\beta_2$ are each set close to $1$.
Section \ref{subsec:3.1} shows this result in detail.

\subsubsection{Critical batch size}
We can check that the upper bound $C_1/K + C_2/b + C_3$ of $\mathrm{VI}(K,\bm{\theta})$ becomes small when a batch size $b$ is large.
Motivated by the results in \cite{shallue2019,zhang2019} and Section \ref{subsec:1.2.2}, we use SFO complexity $Kb$ as the performance measure of a deep learning optimizer.
We first show that the number of steps $K$ to satisfy $\mathrm{VI}(K,\bm{\theta}) \leq \epsilon$ can be defined as in Table \ref{table:1}.
As a function, $K$ is convex and monotone decreasing with respect to batch size $b$.
Next, we show that SFO complexity $Kb$ defined as in Table \ref{table:1} 
is convex with respect to batch size $b$. 
This result agrees with the fact of (\ref{fact}).
Moreover, SFO complexity $Kb$ is minimized at 
\begin{align}\label{cbs_1}
b^\star = \frac{2 C_2}{\epsilon - C_3}.
\end{align} 
Section \ref{subsec:3.2} shows the above results in detail.
From the above discussion, we conclude that a deep learning optimizer should use the following parameters: 
\begin{itemize}
\item a small constant learning rate $\alpha$ (e.g., $\alpha = 10^{-3}$)
and hyperparameters $\beta_1$ and $\beta_2$ close to $1$ (e.g., $\beta_1, \beta_2 \in \{ 0.9, 0.99, 0.999\}$)
\item the critical batch size $b^\star$ defined by (\ref{cbs_1})
\end{itemize}
The accurate setting of the critical batch size $b^\star$ defined by (\ref{cbs_1}) would be difficult since $b^\star$ in (\ref{cbs_1}) involves unknown parameters, such as $G$ and $\sigma$ (see (\ref{sigma}) and (A1)).
Hence, we would like to estimate appropriate batch sizes using the formula (\ref{cbs_1}) for $b^\star$ before implementing deep learning optimizers. 
Section \ref{sec:4} will discuss estimation of appropriate batch sizes in  detail.

\section{Nonconvex Optimization and Deep Learning Optimizers}\label{sec:2}
This section gives a nonconvex optimization problem in deep neural networks and optimizers for solving the problem under standard assumptions.

\subsection{Nonconvex optimization in deep learning}\label{subsec:2.1}
Let $\mathbb{R}^d$ be a $d$-dimensional Euclidean space with inner product $\langle \bm{x},\bm{y} \rangle := \bm{x}^\top \bm{y}$ inducing the norm $\| \bm{x}\|$ and $\mathbb{N}$ be the set of nonnegative integers. Define $[n] := \{1,2,\ldots,n\}$ for $n \geq 1$. 
Given a parameter $\bm{\theta} \in \mathbb{R}^d$ and a data point $z$ in a data domain $Z$, a machine learning model provides a prediction whose quality is measured by a differentiable nonconvex loss function $\ell(\bm{\theta};z)$. We aim to minimize the expected loss defined for all $\bm{\theta} \in \mathbb{R}^d$ by
\begin{align}\label{expected}
f(\bm{\theta}) = \mathbb{E}_{z \sim \mathcal{D}} 
[\ell(\bm{\theta};z) ]
= \mathbb{E}[ \ell_{\xi} (\bm{\theta}) ],
\end{align}
where $\mathcal{D}$ is a probability distribution over $Z$, $\xi$ denotes a random variable with distribution function $P$, and $\mathbb{E}[\cdot]$ denotes the expectation taken with respect to $\xi$. A particularly interesting example of (\ref{expected}) is the empirical average loss defined for all $\bm{\theta} \in \mathbb{R}^d$ by 
\begin{align}\label{empirical}
f(\bm{\theta}; S) = \frac{1}{n} \sum_{i\in [n]} \ell(\bm{\theta};z_i)
= \frac{1}{n} \sum_{i\in [n]} \ell_i(\bm{\theta}),
\end{align}
where $S = (z_1, z_2, \ldots, z_n)$ denotes the training set and $\ell_i (\cdot) := \ell(\cdot;z_i)$ denotes the loss function corresponding to the $i$-th training data $z_i$. 

We would like to find a stationary point $\bm{\theta}^\star \in \mathbb{R}^d$ of $f$ defined by (\ref{expected}).
A point $\bm{\theta}^\star \in \mathbb{R}^d$ is a stationary point of $f$ 
if and only if $\bm{\theta}^\star \in \mathbb{R}^d$ satisfies 
the following variational inequality (VI): 
for all $\bm{\theta} \in \mathbb{R}^d$, 
$\nabla f(\bm{\theta}^\star)^\top (\bm{\theta}^\star - \bm{\theta}) \leq 0$.
Hence, we use 
\begin{align*}
\mathrm{VI}(K,\bm{\theta}) := \frac{1}{K} \sum_{k=1}^K \mathbb{E}\left[\nabla f(\bm{\theta}_k)^\top (\bm{\theta}_k - \bm{\theta}) \right]
\leq \epsilon
\end{align*}
as the performance measure of the sequence $(\bm{\theta}_k)_{k\in\mathbb{N}}$ generated by a deep learning optimizer,
where $\epsilon > 0$ is the precision and $K$ denotes the number of steps of the optimizer (see also Table \ref{table:1}).

\subsection{Deep learning optimizers}\label{subsec:2.2}
\subsubsection{Conditions}
We assume that a stochastic first-order oracle (SFO) exists such that, for a given $\bm{\theta} \in \mathbb{R}^d$, it returns a stochastic gradient $\mathsf{G}_{\xi}(\bm{\theta})$ of the function $f$ defined by (\ref{expected}), where a random variable $\xi$ is supported on $\Xi$ independently of $\bm{\theta}$. 
The following are standard conditions when considering a deep learning optimizer.
\begin{enumerate}
\item[(C1)] $f \colon \mathbb{R}^d \to \mathbb{R}$ defined by (\ref{expected}) is continuously differentiable.
\item[(C2)] Let $(\bm{\theta}_k)_{k\in \mathbb{N}} \subset \mathbb{R}^d$ be the sequence generated by a deep learning optimizer. For each iteration $k$, 
\begin{align}\label{gradient}
\mathbb{E}_{\xi_k}[ \mathsf{G}_{\xi_k}(\bm{\theta}_k)] = \nabla f(\bm{\theta}_k),
\end{align}
where $\xi_0, \xi_1, \ldots$ are independent samples and the random variable $\xi_k$ is independent of $(\bm{\theta}_l)_{l=0}^k$. There exists a nonnegative constant $\sigma^2$ such that 
\begin{align}\label{sigma}
\mathbb{E}_{\xi_k}\left[ \left\|\mathsf{G}_{\xi_k}(\bm{\theta}_k) - 
\nabla f(\bm{\theta}_k) \right\|^2 \right] \leq \sigma^2.
\end{align}
\item[(C3)] For each iteration $k$, the optimizer samples a batch $B_{k}$ of size $b$ independently of $k$ and estimates the full gradient $\nabla f$ as 
\begin{align*}
\nabla f_{B_k} (\bm{\theta}_k)
:= \frac{1}{b} \sum_{i\in [b]} \mathsf{G}_{\xi_{k,i}}(\bm{\theta}_k),
\end{align*}
where $\xi_{k,i}$ is a random variable generated by the $i$-th sampling in the $k$-th iteration. 
\end{enumerate}
In the case that $f$ is defined by (\ref{empirical}), we have that, for each $k$, $B_k \subset [n]$ and 
\begin{align*}
\nabla f_{B_k} (\bm{\theta}_k)
= \frac{1}{b} \sum_{i \in [b]} \nabla \ell_{\xi_{k,i}} (\bm{\theta}_k).
\end{align*}

\subsubsection{Adam}
Algorithm \ref{algo:1} is the Adam optimizer \cite{adam} under (C1)--(C3).
The symbol $\odot$ in step 6 is defined for all $\bm{x} = (x_i)_{i=1}^d \in \mathbb{R}^d$, $\bm{x} \odot \bm{x} := (x_i^2)_{i=1}^d \in \mathbb{R}^d$,
and $\mathsf{diag}(x_i)$ in step 8 is a diagonal matrix with diagonal components $x_1, x_2, \ldots, x_d$.

\begin{algorithm} 
\caption{Adam \cite{adam}} 
\label{algo:1} 
\begin{algorithmic}[1] 
\REQUIRE
$\alpha \in (0,+\infty)$, 
$b \in (0,+\infty)$,
$\beta_{1} \in (0,1)$, 
$\beta_{2} \in [0,1)$
\STATE
$k \gets 0$, $\bm{\theta}_{0} \in\mathbb{R}^d$, $\bm{m}_{-1} := \bm{0}$, 
$\bm{v}_{-1} := \bm{0}$
\LOOP 
\STATE
$\nabla f_{B_k} (\bm{\theta}_k)
:= b^{-1} \sum_{i\in [b]} \mathsf{G}_{\xi_{k,i}}(\bm{\theta}_k)$
\STATE 
$\bm{m}_k := \beta_{1} \bm{m}_{k-1} + (1-\beta_{1}) \nabla f_{B_k}(\bm{\theta}_k)$
\STATE
$\displaystyle{\hat{\bm{m}}_k := (1-\beta_{1}^{k+1})^{-1}\bm{m}_k}$
\STATE
$\bm{v}_k := \beta_{2} \bm{v}_{k-1} + (1-\beta_{2}) \nabla f_{B_k}(\bm{\theta}_k) \odot \nabla f_{B_k}(\bm{\theta}_k)$
\STATE
$\displaystyle{\hat{\bm{v}}_k := (1-\beta_{2}^{k+1})^{-1}\bm{v}_k}$
\STATE
$\mathsf{H}_k := \mathsf{diag}(\sqrt{\hat{v}_{k,i}})$
\STATE 
$\bm{\theta}_{k+1} := \bm{\theta}_k - \alpha_k \mathsf{H}_k^{-1} \hat{\bm{m}}_k$
\STATE $k \gets k+1$
\ENDLOOP 
\end{algorithmic}
\end{algorithm}

The SGD optimizer under (C1)--(C3) is Algorithm \ref{algo:1} when
$\beta_1 = 0$ and $\mathsf{H}_k$ is the identity matrix.
The Momentum optimizer under (C1)--(C3) is Algorithm \ref{algo:1} when
$\mathsf{H}_k$ is the identity matrix.

\subsubsection{Assumptions}\label{subsec:2.2.3}
We assume the following conditions that were used in \cite[Theorem 4.1]{adam}:
\begin{enumerate}
\item[(A1)] There exist positive numbers $G$ and $B$ such that, for all $k\in \mathbb{N}$, $\| \nabla f (\bm{\theta}_k) \| \leq G$ and $\|\nabla f_{B_k}(\bm{\theta}_k)\| \leq B$.
\item[(A2)] For all $\bm{\theta} \in \mathbb{R}^d$, there exists a positive number $\mathrm{Dist}(\bm{\theta})$ such that, for all $k\in \mathbb{N}$, $\| \bm{\theta}_k - \bm{\theta} \| \leq \mathrm{Dist}(\bm{\theta})$.
\end{enumerate}
Let $(g_{k,i}^2)_{i=1}^d := \nabla f_{B_k}(\bm{\theta}_k) \odot \nabla f_{B_k}(\bm{\theta}_k)$ $(k\in\mathbb{N})$. Assumption (A1) implies that 
$M := \sup \{ \max_{i\in [d]} g_{k,i}^2 \colon k\in\mathbb{N} \} < + \infty$.
Assumption (A2) implies that 
$D(\bm{\theta}) := \sup \{ \max_{i\in [d]} (\theta_{k,i} - \theta_i)^2 \colon k\in\mathbb{N} \} < + \infty$. 
We define $v_* := \inf \{ \min_{i\in [d]} v_{k,i} \colon k \in \mathbb{N} \}$.
Theorem 3 in \cite{reddi2018} shows that there exists a stochastic convex optimization problem such that Adam using 
$\beta_1 < \sqrt{\beta_2}$ (e.g., $\beta_1 = 0.9$ and $\beta_2 = 0.999$) does not converge to the optimal solution.
If 
for all $k\in\mathbb{N}$ and all $i\in [d]$,
$v_{k,i}$ in Adam satisfies
\begin{align}\label{max_1}
\hat{v}_{k+1,i} \geq \hat{v}_{k,i}, 
\end{align}
then Adam (AMSGrad) with a decaying learning rate $\alpha_k = \mathcal{O}(1/\sqrt{k})$ and $\beta_1$ and $\beta_2$ satisfying $\beta_1 < \sqrt{\beta_2}$ can solve the stochastic convex optimization problem \cite[(2), Theorem 4]{reddi2018}. 
We thus assume condition (\ref{max_1}) for Adam to guarantee the convergence of Adam.

\section{Our Results}\label{sec:3}
This section states our theoretical results (Theorem \ref{thm:1}) in Table \ref{table:1} and our contribution (Sections \ref{subsec:3.1} and \ref{subsec:3.2}) in detail.
The proof of Theorem \ref{thm:1} is given in Appendix \ref{appendix:1}.

\begin{thm}\label{thm:1}
The sequence $(\bm{\theta}_k)_{k\in\mathbb{N}}$ generated by
each of SGD, Momentum, and Adam with (\ref{max_1}) under (C1)--(C3) and (A1) and (A2) satisfies the following:

{\em (i)} {\em [Upper bound of $\mathrm{VI}(K,\bm{\theta})$]}
For all $K \geq 1$ and all $\bm{\theta} \in \mathbb{R}^d$,
\begin{align*}
\mathrm{VI}(K,\bm{\theta})
:= 
\frac{1}{K} \sum_{k=1}^K \mathbb{E}\left[ \nabla f(\bm{\theta}_k)^\top (\bm{\theta}_k - \bm{\theta}) \right] 
\leq 
\frac{C_1}{K} 
+ 
\frac{C_2}{b}
+ 
C_3,
\end{align*}
where $C_i$ $(i=1,2,3)$ for SGD are 
\begin{align*}
&C_1 := \frac{\mathbb{E}[\|\bm{\theta}_1 - \bm{\theta} \|^2]}{2 \alpha}, \text{ }
C_2 := \frac{\sigma^2 \alpha}{2}, \text{ }
C_3 := \frac{G^2 \alpha}{2},
\end{align*}
$C_i$ $(i=1,2,3)$ for Momentum are
\begin{align*}
&C_1 := \frac{\mathbb{E}[\|\bm{\theta}_1 - \bm{\theta} \|^2]}{2 \alpha \beta_1}, \text{ }
C_2 := \frac{\sigma^2 \alpha}{2 \beta_1},\\
&C_3 := \frac{G^2 \alpha}{2\beta_1} 
+ \mathrm{Dist}(\bm{\theta})\left\{ \frac{G(1-\beta_1)}{\beta_1} 
+ 2\sqrt{\sigma^2 + G^2}
\left(\frac{1}{\beta_1} + 2 (1-\beta_1) \right)  \right\},
\end{align*}
$C_i$ $(i=1,2,3)$ for Adam with (\ref{max_1}) are
\begin{align*}
&C_1 := \frac{d D(\bm{\theta}) \sqrt{M}}{2 \alpha \beta_1 \sqrt{1-\beta_2}}, \text{ }
C_2 := \frac{\sigma^2 \alpha}{2 \sqrt{v_*} \beta_1(1-\beta_1)},\\
&C_3 := \frac{G^2 \alpha}{2\sqrt{v_*} \beta_1(1-\beta_1)} 
+ \mathrm{Dist}(\bm{\theta})\left\{ \frac{G(1-\beta_1)}{\beta_1} 
+ 2\sqrt{\sigma^2 + G^2}
\left(\frac{1}{\beta_1} + 2 (1-\beta_1) \right)  \right\},
\end{align*}
and the parameters are defined as in Section \ref{subsec:2.2}.

{\em (ii)} {\em [Steps to satisfy $\mathrm{VI}(K,\bm{\theta}) \leq \epsilon$]}
The number of steps $K$ defined by 
\begin{align}\label{K}
K (b) = \frac{C_1 b}{(\epsilon - C_3)b - C_2}
\end{align}
satisfies $\mathrm{VI}(K,\bm{\theta}) \leq \epsilon$
and the function $K(b)$ defined by (\ref{K}) is convex and monotone decreasing with respect to batch size $b$
$(> C_2/(\epsilon - C_3) > 0)$. 

{\em (iii)} {\em [Minimization of SFO complexity]}
The SFO complexity defined by 
\begin{align}\label{Kb}
K (b) b = \frac{C_1 b^2}{(\epsilon - C_3)b - C_2}
\end{align}
is convex with respect to batch size $b$
$(> C_2/(\epsilon - C_3) > 0)$.
The batch size 
\begin{align}\label{cbs}
b^\star := \frac{2 C_2}{\epsilon - C_3}
\end{align}
attains the minimum value
\begin{align*}
K (b^\star) b^\star = \frac{2 C_1 C_2}{(\epsilon - C_3)^2}
\quad\text{of } K(b)b.
\end{align*}
\end{thm}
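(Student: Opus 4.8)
The three parts are of very different character: parts (ii) and (iii) are one-variable calculus applied to the inequality proved in (i), so essentially all of the work is in (i). There I would run a single descent-type argument for Adam; Momentum is the specialization $\mathsf{H}_k = I$, and SGD is the further case $\beta_1 = 0$ within that (so that $\hat{\bm{m}}_k = \nabla f_{B_k}(\bm{\theta}_k)$ and $\mathsf{H}_k = I$), in which the momentum rearrangement below is vacuous and one obtains the textbook SGD estimate directly. I would keep the bias-correction factors $1-\beta_1^{k+1}$ and $1-\beta_2^{k+1}$ throughout.

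Fix $\bm{\theta}\in\mathbb{R}^d$. From step 9, $\theta_{k+1,i} = \theta_{k,i} - \alpha\hat{m}_{k,i}/\sqrt{\hat{v}_{k,i}}$; expanding $(\theta_{k+1,i}-\theta_i)^2$, multiplying by $\sqrt{\hat{v}_{k,i}}$, and summing over $i$ gives
\begin{align*}
2\alpha\,\hat{\bm{m}}_k^\top(\bm{\theta}_k-\bm{\theta})
&= \sum_{i=1}^d \sqrt{\hat{v}_{k,i}}\left[(\theta_{k,i}-\theta_i)^2-(\theta_{k+1,i}-\theta_i)^2\right] \\
&\quad + \alpha^2\,\hat{\bm{m}}_k^\top \mathsf{H}_k^{-1}\hat{\bm{m}}_k .
\end{align*}
Summing over $k = 1,\dots,K$, the first term on the right is not a clean telescope because $\mathsf{H}_k$ varies with $k$, so I would apply Abel summation coordinatewise, rewriting its $k$-sum as $\sqrt{\hat{v}_{1,i}}(\theta_{1,i}-\theta_i)^2 - \sqrt{\hat{v}_{K,i}}(\theta_{K+1,i}-\theta_i)^2 + \sum_{k=2}^K(\sqrt{\hat{v}_{k,i}}-\sqrt{\hat{v}_{k-1,i}})(\theta_{k,i}-\theta_i)^2$. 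Condition (\ref{max_1}) forces every increment $\sqrt{\hat{v}_{k,i}}-\sqrt{\hat{v}_{k-1,i}}\ge 0$, so each such term is at most $D(\bm{\theta})$ times that increment and the sum collapses; together with $\hat{v}_{k,i}\le M/(1-\beta_2)$ (from (A1) and $1-\beta_2^{k+1}\ge 1-\beta_2$) this bounds the whole $k$-sum by $dD(\bm{\theta})\sqrt{M/(1-\beta_2)}$, and for $\mathsf{H}_k = I$ it is just the telescope $\|\bm{\theta}_1-\bm{\theta}\|^2-\|\bm{\theta}_{K+1}-\bm{\theta}\|^2$. This produces the $C_1$ term.

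The step I expect to be the main obstacle is converting $\sum_{k=1}^K\hat{\bm{m}}_k^\top(\bm{\theta}_k-\bm{\theta})$, which the display above controls, into $\sum_{k=1}^K\nabla f(\bm{\theta}_k)^\top(\bm{\theta}_k-\bm{\theta})$, which is what appears in $\mathrm{VI}(K,\bm{\theta})$. Using $\nabla f_{B_k}(\bm{\theta}_k)=(\bm{m}_k-\beta_1\bm{m}_{k-1})/(1-\beta_1)$, $\hat{\bm{m}}_k=\bm{m}_k/(1-\beta_1^{k+1})$, and $\bm{\theta}_k-\bm{\theta}_{k+1}=\alpha\mathsf{H}_k^{-1}\hat{\bm{m}}_k$, a second Abel summation on $\bm{m}_k^\top(\bm{\theta}_k-\bm{\theta})$ relates $\sum_k\nabla f_{B_k}(\bm{\theta}_k)^\top(\bm{\theta}_k-\bm{\theta})$ to $\beta_1^{-1}\sum_k\hat{\bm{m}}_k^\top(\bm{\theta}_k-\bm{\theta})$ up to boundary terms and ``drift'' terms of the form $\bm{m}_j^\top(\bm{\theta}_k-\bm{\theta}_{k-1})$; the drift terms are controlled via (A1)--(A2) by $\|\bm{\theta}_k-\bm{\theta}\|\le\mathrm{Dist}(\bm{\theta})$, $\|\nabla f(\bm{\theta}_k)\|\le G$, and $\mathbb{E}[\|\nabla f_{B_k}(\bm{\theta}_k)\|^2]\le\sigma^2/b+G^2\le\sigma^2+G^2$, which is precisely where the correction $\mathrm{Dist}(\bm{\theta})\{G(1-\beta_1)/\beta_1+2\sqrt{\sigma^2+G^2}(\beta_1^{-1}+2(1-\beta_1))\}$ in $C_3$ and the extra $\beta_1^{-1}$ and $(1-\beta_1)^{-1}$ factors in $C_1,C_2,C_3$ originate. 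The remaining term $\alpha^2\hat{\bm{m}}_k^\top\mathsf{H}_k^{-1}\hat{\bm{m}}_k=\alpha^2\sum_i\hat{m}_{k,i}^2/\sqrt{\hat{v}_{k,i}}$ is handled using $\sqrt{\hat{v}_{k,i}}\ge\sqrt{v_{k,i}}\ge\sqrt{v_*}$ and a convexity bound $\mathbb{E}[\|\hat{\bm{m}}_k\|^2]\le\sigma^2/b+G^2$ (up to bias-correction factors): its $1/b$ part produces $C_2$ and its $G^2$ part the leading piece of $C_3$, with the $1/b$ resting on the variance reduction $\mathbb{E}[\|\nabla f_{B_k}(\bm{\theta}_k)-\nabla f(\bm{\theta}_k)\|^2]\le\sigma^2/b$ from the $b$ independent samples of (C3), cf.\ (\ref{sigma}). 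Finally, taking expectations and invoking the unbiasedness (\ref{gradient}) -- so that $\mathbb{E}[\nabla f_{B_k}(\bm{\theta}_k)^\top(\bm{\theta}_k-\bm{\theta})]=\mathbb{E}[\nabla f(\bm{\theta}_k)^\top(\bm{\theta}_k-\bm{\theta})]$ since $\xi_{k,\cdot}$ is independent of $\bm{\theta}_k$ -- and dividing by $2\alpha\beta_1 K$ gives $\mathrm{VI}(K,\bm{\theta})\le C_1/K+C_2/b+C_3$, which is part (i).

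For part (ii), put $A:=\epsilon-C_3>0$. Substituting $K=K(b)=C_1b/(Ab-C_2)$ into the bound of (i) gives $C_1/K(b)+C_2/b+C_3=(A-C_2/b)+C_2/b+C_3=\epsilon$, so $\mathrm{VI}(K(b),\bm{\theta})\le\epsilon$; moreover $K'(b)=-C_1C_2/(Ab-C_2)^2<0$ and $K''(b)=2AC_1C_2/(Ab-C_2)^3>0$ for $b>C_2/A$, which is the claimed monotone decrease and convexity. For part (iii), set $\varphi(b):=K(b)b=C_1b^2/(Ab-C_2)$ and substitute $u:=Ab-C_2>0$, so that $\varphi=(C_1/A^2)(u+2C_2+C_2^2/u)$; since $u\mapsto u+C_2^2/u$ is convex on $(0,\infty)$ and $b\mapsto u$ is affine, $\varphi$ is convex in $b$. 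Its derivative vanishes on the admissible range $b>C_2/A$ only at $u=C_2$, i.e.\ at $b=2C_2/A=b^\star$, and changes sign from negative to positive there, so $b^\star$ is the global minimizer; substituting $b^\star$ into (\ref{Kb}) gives the asserted minimum value of $K(b)b$.
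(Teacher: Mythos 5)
Your overall architecture is the same as the paper's: expand the update in the $\mathsf{H}_k$-metric (the paper's Lemma A.1 / inequality (\ref{ineq:004})), handle the non-telescoping weighted distance terms by summation by parts, using (\ref{max_1}) to make the increments $\sqrt{\hat{v}_{k,i}}-\sqrt{\hat{v}_{k-1,i}}$ nonnegative and $\hat{v}_{k,i}\le M/(1-\beta_2)$ to get $C_1$ (this is exactly the paper's $\Gamma_K$ argument); bound the $\alpha^2\|\bm{\mathsf{d}}_k\|_{\mathsf{H}_k}^2$ term via $v_*$ and $\mathbb{E}[\|\bm{m}_k\|^2]\le\sigma^2/b+G^2$ (the paper's Lemma A.3) to get $C_2$ and the leading piece of $C_3$; use unbiasedness to pass from $\nabla f_{B_k}$ to $\nabla f$ in expectation; and treat SGD by the plain telescope. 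Your parts (ii) and (iii) are correct and match the paper's calculus, and your substitution $u=(\epsilon-C_3)b-C_2$ is a cleaner way to see convexity and the minimizer $b^\star=2C_2/(\epsilon-C_3)$.

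The genuine gap is at the step you yourself flag as the main obstacle: converting $\sum_k\hat{\bm{m}}_k^\top(\bm{\theta}_k-\bm{\theta})$ into $\sum_k\nabla f(\bm{\theta}_k)^\top(\bm{\theta}_k-\bm{\theta})$ with the \emph{stated} constants. You propose a second Abel summation producing boundary terms and drift terms $\bm{m}_j^\top(\bm{\theta}_k-\bm{\theta}_{k-1})$ and assert that this yields ``precisely'' the correction $\mathrm{Dist}(\bm{\theta})\{G(1-\beta_1)/\beta_1+2\sqrt{\sigma^2+G^2}(\beta_1^{-1}+2(1-\beta_1))\}$ and the factors $\beta_1^{-1}$, $(1-\beta_1)^{-1}$ in $C_1,C_2,C_3$, but you never carry out the bookkeeping. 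The paper does not sum by parts here: it rearranges Lemma A.1 to isolate $\mathbb{E}[(\bm{\theta}_k-\bm{\theta})^\top\bm{m}_{k-1}]$ (which is where the extra $1/\beta_1$ enters) and then uses two per-step decompositions based on $\bm{m}_k=\beta_1\bm{m}_{k-1}+(1-\beta_1)\nabla f_{B_k}(\bm{\theta}_k)$ — inequalities (\ref{M_k}) and (\ref{M_k_1}) — each error term bounded by Cauchy--Schwarz with $\mathrm{Dist}(\bm{\theta})$ and $\sqrt{\sigma^2+G^2}$; the specific coefficients $\hat\beta_1 G/\beta_1$, $2\hat\beta_1\sqrt{\sigma^2+G^2}$, and $2(\beta_1^{-1}+\hat\beta_1)\sqrt{\sigma^2+G^2}$ in $C_3$, and the $\beta_1\hat\beta_1$ in the denominators of $C_2,C_3$ for Adam, come from exactly this decomposition. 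Your drift-term route plausibly gives a bound of the same \emph{form}, but it is not shown (and not obvious) that it reproduces these constants, so as written the theorem's stated $C_2,C_3$ are not established. A smaller but concrete issue: in (iii) you claim substitution gives ``the asserted minimum value,'' yet your own substitution $u=C_2$ yields $K(b^\star)b^\star=(C_1/A^2)(C_2+2C_2+C_2)=4C_1C_2/(\epsilon-C_3)^2$, not $2C_1C_2/(\epsilon-C_3)^2$; the minimizer $b^\star$ is correct, but the stated minimum value is off by a factor of two (an error your computation would have exposed had you finished it, and which is also present in the paper's appendix).
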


\subsection{Advantage of setting a small constant learning rate and hyperparameters close to one}\label{subsec:3.1}
We first show theoretical evidence that Momentum and Adam using a small constant learning rate $\alpha$, $\beta_1$ and $\beta_2$ close to $1$, and a large number of steps $K$ perform well.
Theorem \ref{thm:1}(i) indicates that the upper bound of $\mathrm{VI}(K,\bm{\theta})$ for Momentum is 
\begin{align}\label{upper_momentum}
&\mathrm{VI}(K,\bm{\theta}) \leq 
\frac{\mathbb{E}[\|\bm{\theta}_1 - \bm{\theta} \|^2]}{2\alpha \beta_1 K}
+ \frac{\sigma^2 \alpha}{2 \beta_1 b}
+ \frac{G^2 \alpha}{2\beta_1} + h(\beta_1),
\end{align} 
where $\beta_1 \in (0,1)$ and 
\begin{align}\label{h}
h(\beta_1) := \mathrm{Dist}(\bm{\theta})\left\{ \frac{G(1-\beta_1)}{\beta_1} 
+ 2\sqrt{\sigma^2 + G^2}
\left(\frac{1}{\beta_1} + 2 (1-\beta_1) \right)  \right\}.
\end{align}
Since the function $h(\beta_1)$  defined by (\ref{h}) is monotone decreasing, using $\beta_1$ close to $1$ makes $h(\beta_1)$ small. 
Hence, using $\beta_1$ close to $1$ makes the upper bound in (\ref{upper_momentum}) small. 
Moreover, using a small learning rate $\alpha$ makes 
$\sigma^2 \alpha/(2 \beta_1 b)
+ G^2 \alpha/(2\beta_1)$ in (\ref{upper_momentum})
small.
Meanwhile, using a small learning rate $\alpha$ makes $\mathbb{E}[\|\bm{\theta}_1 - \bm{\theta} \|^2]/(2\alpha \beta_1 K)$ in (\ref{upper_momentum}) large. 
Hence, we need to use a large $K$ to make $\mathbb{E}[\|\bm{\theta}_1 - \bm{\theta} \|^2]/(2\alpha \beta_1 K)$ in (\ref{upper_momentum}) small. 
Theorem \ref{thm:1}(i) indicates that the upper bound of $\mathrm{VI}(K,\bm{\theta})$ for Adam is
\begin{align}
\mathrm{VI}(K,\bm{\theta}) 
&\leq
\frac{d {D}(\bm{\theta}) \sqrt{M}}{2\alpha \beta_1 \sqrt{1-\beta_2} K}
+ 
\frac{\alpha (\sigma^2 b^{-1} + G^2)}{2\sqrt{v_*}\beta_1(1-\beta_1) K}
\sum_{k=1}^K \sqrt{1-\beta_2^{k+1}}
+ h(\beta_1) \label{upper_adam_1}\\
&\leq
\frac{d {D}(\bm{\theta}) \sqrt{M}}{2\alpha \beta_1 \sqrt{1-\beta_2} K}
+ 
\frac{\sigma^2 \alpha }{2\sqrt{v_*}\beta_1(1-\beta_1) b}
+
\frac{G^2 \alpha}{2\sqrt{v_*}\beta_1(1-\beta_1)}
+ h(\beta_1), \label{upper_adam_2}
\end{align}
where $\beta_1 \in (0,1)$ and $\beta_2 \in [0,1)$
(while Theorem \ref{thm:1}(i) shows (\ref{upper_adam_2}),
the strict evaluation (\ref{upper_adam_1}) comes from (\ref{B_k_1}) in Appendix \ref{appendix:1}).
Using $\beta_1$ close to $1$ makes $h(\beta_1)$ small. 
Since $1/(\beta_1(1-\beta_1))$ is monotone increasing for $\beta_1 \geq 1/2$, using $\beta_1$ close to $1$ makes $1/(\beta_1(1-\beta_1))$ large.
Hence, we need to set a small $\alpha$ to make 
$\alpha (\sigma^2/b + G^2)/(2\sqrt{v_*}\beta_1(1-\beta_1))$ small.
The function $\sqrt{1-\beta_2^{k+1}}$ is monotone decreasing with respect to $\beta_2$, while using $\beta_2$ close to $1$ makes $1/\sqrt{1-\beta_2}$ large.
When $\beta_2$ close to $1$ is used, we need to use a large $K$ to make 
$d {D}(\bm{\theta}) \sqrt{M}/(2\alpha \beta_1 \sqrt{1-\beta_2} K)$ small.

\subsection{Critical batch size}\label{subsec:3.2}
Here, we can see that the upper bound of $\mathrm{VI}(K,\bm{\theta})$ for SGD is 
\begin{align}\label{upper_sgd}
&\mathrm{VI}(K,\bm{\theta}) \leq 
\frac{\mathbb{E}[\|\bm{\theta}_1 - \bm{\theta} \|^2]}{2\alpha K}
+ \frac{\sigma^2 \alpha}{2 b}
+ \frac{G^2 \alpha}{2}.
\end{align}
Compared with (\ref{upper_momentum}) and (\ref{upper_adam_2}), it seems that, for a fixed small constant learning rate $\alpha$ (e.g., $\alpha = 10^{-3}$), SGD performs better than Momentum and Adam in terms of the upper bound of $\mathrm{VI}(K,\bm{\theta})$.
Meanwhile, we can check that the upper bounds of $\mathrm{VI}(K,\bm{\theta})$ in (\ref{upper_momentum}), (\ref{upper_adam_2}), and (\ref{upper_sgd}) are small when batch size $b$ is large.
Accordingly, we need to check carefully how to set batch sizes to improve theoretical and practical performances of deep learning optimizers.
Motivated by the results in \cite{shallue2019,zhang2019} and Section \ref{subsec:1.2.2}, we use SFO complexity $Kb$ as the performance measure of a deep learning optimizer.

Theorem \ref{thm:1}(ii) indicates that the number of steps $K$ to satisfy $\mathrm{VI}(K,\bm{\theta}) \leq \epsilon$ can be expressed as (\ref{K}).
The function $K(b)$ defined by (\ref{K}) is convex and monotone decreasing.
Hence, the form of $K$ defined by (\ref{K}) supports theoretically the relationship between $K$ and $b$ shown in \cite{shallue2019,zhang2019} (see also Figures \ref{fig1} and \ref{fig3} in this paper).
Theorem \ref{thm:1}(iii) indicates that SFO complexity defined by 
(\ref{Kb})
is convex with respect to batch size $b$. 
This result agrees with the fact of (\ref{fact}) (see also Figures \ref{fig2} and \ref{fig4} in this paper).
Moreover, SFO complexity $Kb$ is minimized at $b^\star$ defined by (\ref{cbs}); e.g., $b_{\mathrm{S}}^\star$ for SGD, 
$b_{\mathrm{M}}^\star$ for Momentum, and $b_{\mathrm{A}}^\star$ for Adam
are respectively  
\begin{align}\label{cbs_s_m_a}
b_{\mathrm{S}}^\star = \frac{\sigma_{\mathrm{S}}^2 \alpha}{\epsilon - C_{3,\mathrm{S}}},
\text{ }
b_{\mathrm{M}}^\star = \frac{\sigma_{\mathrm{M}}^2 \alpha}{\beta_1(\epsilon - C_{3,\mathrm{M}})},
\text{ }  
b_{\mathrm{A}}^\star = \frac{\sigma_{\mathrm{A}}^2 \alpha}{\sqrt{v_*}\beta_1(1-\beta_1)(\epsilon - C_{3,\mathrm{A}})},
\end{align}
where $\sigma_{\mathrm{S}}$, $\sigma_{\mathrm{M}}$, and $\sigma_{\mathrm{A}}$ are positive constants depending on SGD, Momentum, and Adam and $C_3 = C_{3,\mathrm{S}},C_{3,\mathrm{M}}, C_{3,\mathrm{A}}$ are positive constants defined as in Theorem \ref{thm:1}.
From (\ref{cbs_s_m_a}), the lower bounds $b^*$ of 
$b_{\mathrm{S}}^\star$, 
$b_{\mathrm{M}}^\star$, and $b_{\mathrm{A}}^\star$ 
are respectively 
\begin{align*}
b_{\mathrm{S}}^\star > b_{\mathrm{S}}^* := \frac{\sigma_{\mathrm{S}}^2 \alpha}{\epsilon},
\text{ }
b_{\mathrm{M}}^\star > b_{\mathrm{M}}^* := \frac{\sigma_{\mathrm{M}}^2 \alpha}{\beta_1 \epsilon},
\text{ }  
b_{\mathrm{A}}^\star > b_{\mathrm{A}}^* := \frac{\sigma_{\mathrm{A}}^2 \alpha}{\sqrt{v_*}\beta_1(1-\beta_1) \epsilon }.
\end{align*}
If $\sigma_{\mathrm{S}} = \sigma_{\mathrm{M}} = \sigma_{\mathrm{A}}$, $\alpha$ and $\beta_1$ are fixed
(e.g., $\alpha = 10^{-3}$ and $\beta_1 = 0.9$),
and $\sqrt{v_*} (1-\beta_1) < 1$ (e.g., $0.1 \sqrt{v_*} < 1$ when $\beta_1 = 0.9$), 
then
\begin{align}\label{cbs_s_m_a_1}
b_{\mathrm{S}}^* := \frac{\sigma_{\mathrm{S}}^2 \alpha}{\epsilon}
< 
b_{\mathrm{M}}^* := \frac{\sigma_{\mathrm{M}}^2 \alpha}{\beta_1 \epsilon}
< 
b_{\mathrm{A}}^* := \frac{\sigma_{\mathrm{A}}^2 \alpha}{\sqrt{v_*}\beta_1(1-\beta_1) \epsilon }.
\end{align}
%Let us consider the case where $\sigma_{\mathrm{S}} = \sigma_{\mathrm{M}} = \sigma_{\mathrm{A}}$ does not hold.
%Adam (\ref{adam}) with (\ref{max_1}) can approximate a stationary point of $f$, i.e., $\|\nabla f(\bm{\theta}_k)\| \approx 0$ for a large steps $k$.
%Hence, from the definition of $v_{k,i} := \beta_2 v_{k-1,i} + (1-\beta_2) g_{k,i}^2$, where $\nabla f_{B_k} (\bm{\theta}_k) := (g_{k,i})_{i=1}^d$,
%$v_* := \inf_{k \in \mathbb{N}} \min_{i\in [d]} v_{k,i}$ is small. 
Hence, $b_{\mathrm{A}}^*$ is larger than $b_{\mathrm{S}}^*$ and 
$b_{\mathrm{M}}^*$.
Section \ref{sec:4} discusses estimation of appropriate batch sizes in detail.

\section{Numerical Results}\label{sec:4}
We evaluated the performances of SGD, Momentum, and Adam with different batch sizes. 
The metrics were the number of steps $K$ and the SFO complexity $Kb$ satisfying $f(\bm{\theta}_K) \leq 10^{-1}$, where $\bm{\theta}_K$ is generated for each of SGD, Momentum, and Adam using batch size $b$. 
The stopping condition was $200$ epochs. 
The experimental environment consisted of two Intel(R) Xeon(R) Gold 6148 2.4-GHz CPUs with 20 cores each, a 16-GB NVIDIA Tesla V100 900-Gbps GPU, and the Red Hat Enterprise Linux 7.6 OS. 
The code was written in Python 3.8.2 using the NumPy 1.17.3 and PyTorch 1.3.0 packages. 
A constant learning rate $\alpha = 10^{-3}$ was commonly used. 
Momentum used $\beta_1 = 0.9$. 
Adam used $\beta_1 = 0.9$ and $\beta_2 = 0.999$ \cite{adam}.

Here, we use (\ref{cbs_s_m_a_1}) and estimate appropriate batch sizes in the sense that SFO complexity is minimized. 
The definitions of $v_*$ and $v_{k,i}$ (see also (\ref{v_M}) in Appendix \ref{appendix:1}) imply that, for $k\in [K]$ and all $i\in [d]$, 
$v_* \leq v_{k,i} \leq \max_{k\in [K]} \max_{i\in [d]} g_{k,i}^2 =: g_{k^*,i^*}^2
\leq \sum_{i=1}^d g_{k^*,i}^2 = \|\nabla f_{B_{k^*}}(\bm{\theta}_{k^*})\|^2$.
Condition (C2)(\ref{gradient}) implies that 
$\mathbb{E}[\|\nabla f_{B_k}(\bm{\theta}_k)\|^2] \leq \sigma^2/b + \mathbb{E}[\|\nabla f(\bm{\theta}_k)\|^2]$ (see also (\ref{equation_1}) in Appendix \ref{appendix:1}).
Conditions (C2)(\ref{sigma}) and (C3) imply that, if $b$ is large, then $\sigma$ is small. 
Hence, assuming that $\sigma^2/b \approx 0$ implies that
$v_* \leq \|\nabla f(\bm{\theta}_{k^*})\|^2 = \|(1/n) \sum_{i\in[n]} \nabla \ell_i (\bm{\theta}_{k^*})\|^2 = (1/n^2) \|\sum_{i\in[n]} \nabla \ell_i (\bm{\theta}_{k^*})\|^2 =: (1/n^2) \|\bm{G}_{k^*}\|^2 = (1/n^2) \sum_{i\in [d]} G_{k^*,i}^2 \leq (d/n^2) \max_{i\in[d]} G_{k^*,i}^2$. 
Since deep learning optimizers can approximate stationary points of $f$, 
we assume that, for example, $G_{k^*,i} \approx \epsilon$.
Then, (\ref{cbs_s_m_a_1}) implies that 
\begin{align}\label{cbs_s_m_a_2}
b_{\mathrm{S}}^* := \frac{\sigma_{\mathrm{S}}^2}{10^{3} \epsilon},
\text{ } 
b_{\mathrm{M}}^* := \frac{\sigma_{\mathrm{M}}^2}{9\cdot 10^{2} \epsilon},
\text{ } 
b_{\mathrm{A}}^* := \frac{\sigma_{\mathrm{A}}^2}{9\cdot10 \sqrt{v_*} \epsilon} > 
\frac{\sigma_{\mathrm{A}}^2 n}{9\cdot10 \sqrt{d} \epsilon^2}.
\end{align}
Conditions (C2)(\ref{sigma}) and (C3) imply that, if $b$ is large, then $\sigma$ is small. 
If SGD and Momentum can exploit large batch sizes, 
then $\sigma_{\mathrm{S}}$ and $\sigma_{\mathrm{M}}$ are small. 
However, (\ref{cbs_s_m_a_2}) implies that $b_{\mathrm{S}}^*$ and $b_{\mathrm{M}}^*$ must be small when $\sigma_{\mathrm{S}}$ and $\sigma_{\mathrm{M}}$ are small.
Accordingly, SGD and Momentum would not be able to use large batch sizes.
Meanwhile, from (\ref{cbs_s_m_a_2}), we expect that Adam can exploit a large batch size $b_{\mathrm{A}}^\star > b_{\mathrm{A}}^*$, for example, when $b_{\mathrm{A}}^* < 2^{10}$ (CIFAR-10; $b_{\mathrm{A}}^* \approx 1010$, $d = 3072$, $n = 50000$) and 
$2^{11} < b_{\mathrm{A}}^* < 2^{12}$ 
(MNIST; $b_{\mathrm{A}}^* \approx 2380$, $d = 784$, $n = 60000$), where $\sigma_{\mathrm{A}}^2 \approx 10^{-2}$ and $\epsilon \approx 10^{-2}$ are used.

\subsection{ResNet-20 on the CIFAR-10 dataset}
\begin{figure*}[htbp]
\begin{tabular}{cc}
\begin{minipage}[t]{0.45\hsize}
\centering
\includegraphics[width=1\textwidth]{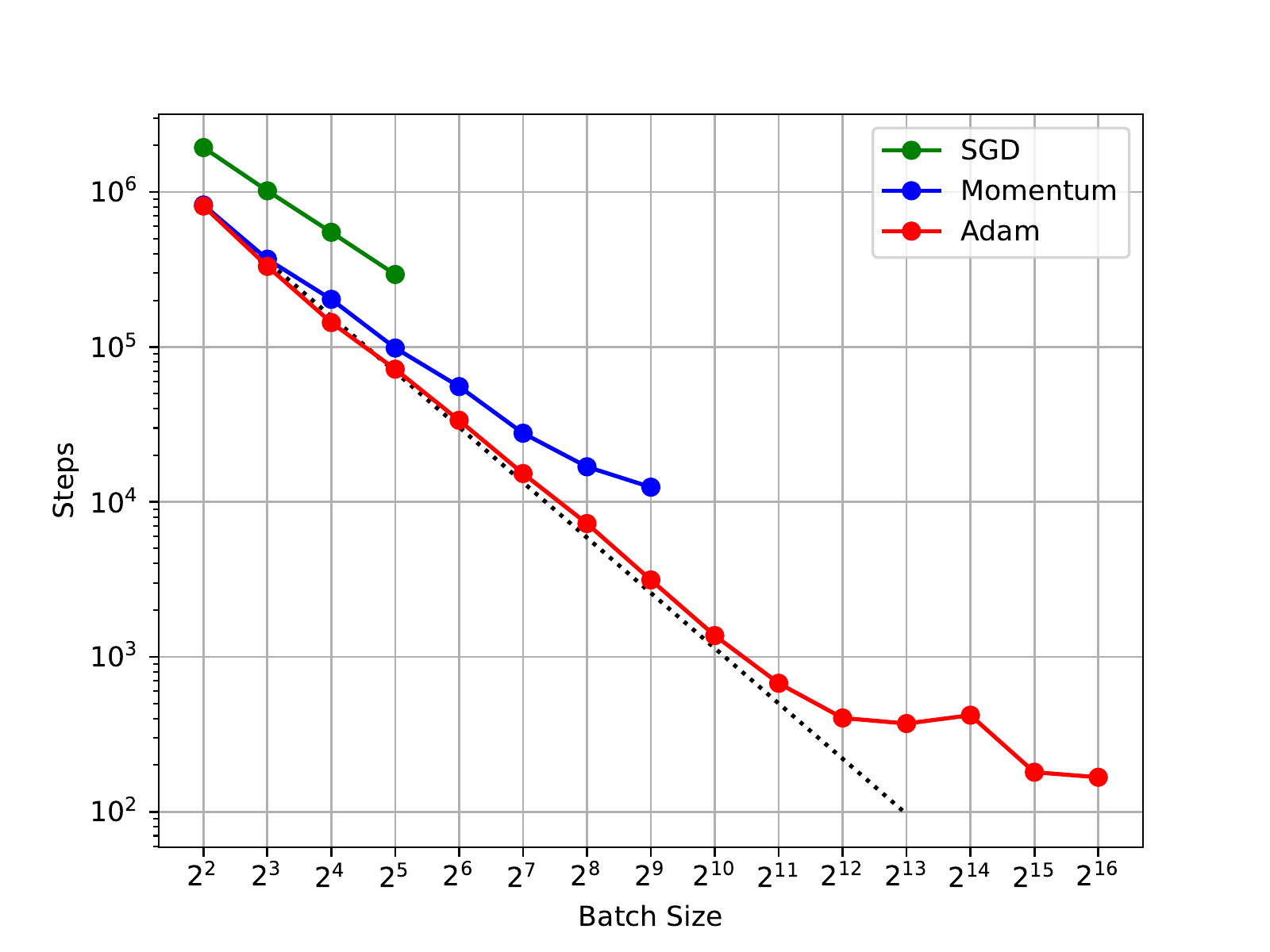}
\caption{Number of steps for SGD, Momentum, and Adam versus batch size needed to train ResNet-20 on CIFAR-10. There is an initial period of perfect scaling (indicated by dashed line) such that the number of steps $K$ for Adam is inversely proportional to batch size $b$. 
Adam has critical batch size $b^\star = 2^{11}$.}
\label{fig1}
\end{minipage} &
\begin{minipage}[t]{0.45\hsize}
\centering
\includegraphics[width=1\textwidth]{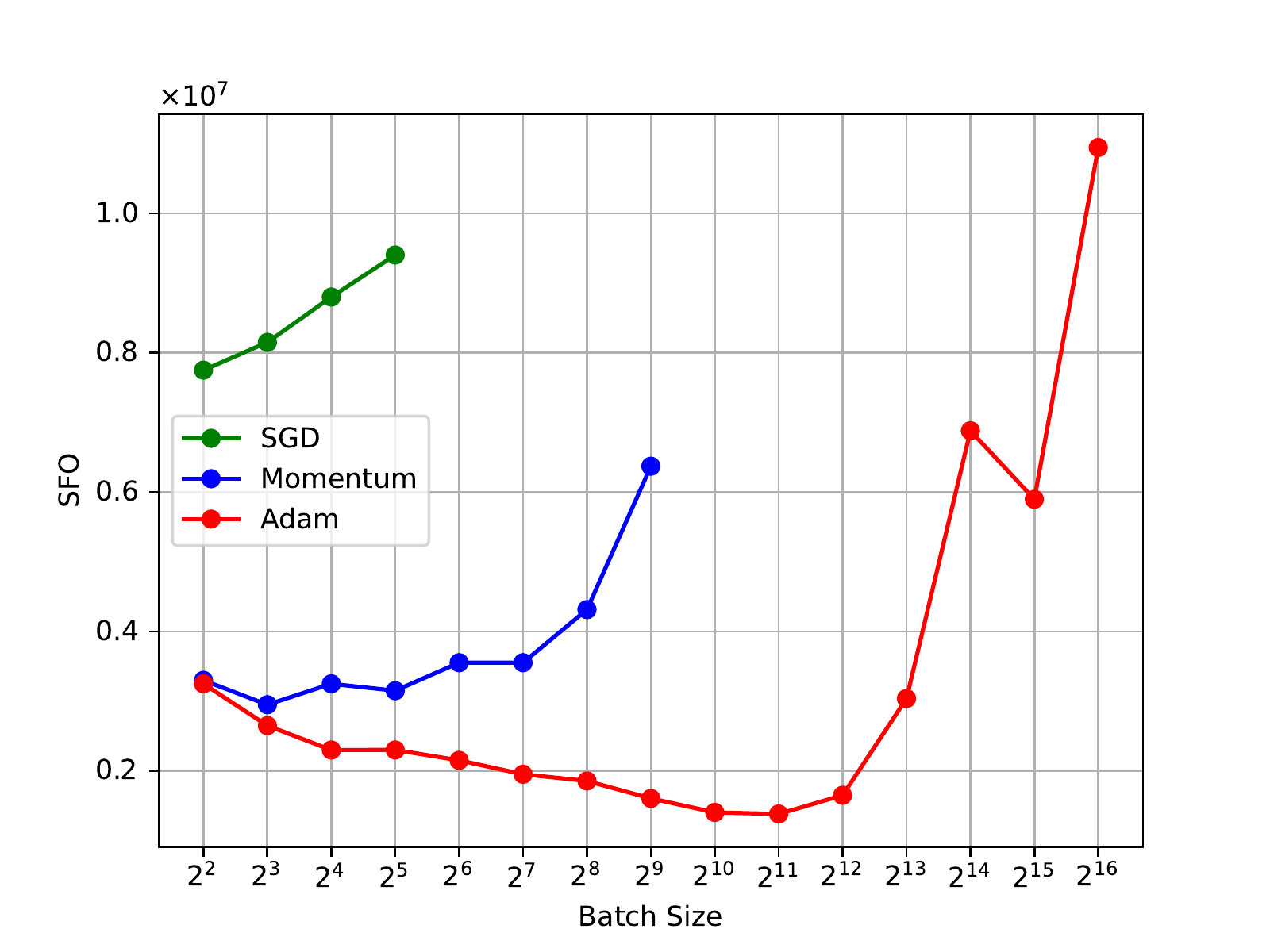}
\caption{SFO complexities for SGD, Momentum, and Adam versus batch size needed to train ResNet-20 on CIFAR-10. 
SFO complexity of Adam (resp. Momentum) is minimized at critical batch size $b^\star = 2^{11}$ (resp. $b^\star = 2^3$), whereas SFO complexity for SGD tends to increase with batch size.}
\label{fig2}
\end{minipage}
\end{tabular}
\end{figure*}

Let us consider training ResNet-20 on the CIFAR-10 dataset with $n = 50000$.
Figure \ref{fig1} shows the number of steps for SGD, Momentum, and Adam versus batch size. For SGD and Momentum, the number of steps $K$ needed for $f(\bm{\theta}_K) \leq 10^{-1}$ initially decreased. However, SGD with $b \geq 2^6$ and Momentum with $b \geq 2^{10}$ did not satisfy $f(\bm{\theta}_K) \leq 10^{-1}$ before the stopping condition was reached. Adam had an initial period of perfect scaling (indicated by dashed line) such that the number of steps $K$ needed for $f(\bm{\theta}_K) \leq 10^{-1}$ was inversely proportional to batch size $b$, and critical batch size $b^\star = 2^{11}$ such that $K$ was not inversely proportional to the batch size beyond $b^\star$; i.e., there were diminishing returns. 

Figure \ref{fig2} plots the SFO complexities for SGD, Momentum, and Adam versus batch size. For SGD, SFO complexity was minimum at $b^\star = 2^2$; for Momentum, it was minimum at $b^\star = 2^3$. 
This implies that SGD and Momentum could not use large batch sizes,
as shown in (\ref{cbs_s_m_a_2}).
For Adam, SFO complexity was minimum at the critical batch size $b^\star = 2^{11}$ that was close to estimation of critical batch 
$b_{\mathrm{A}}^\star > b_{\mathrm{A}}^*$ with $b_{\mathrm{A}}^* < 2^{10}$, as shown in (\ref{cbs_s_m_a_2}).
We also checked that the elapsed time for Adam monotonically decreased for $b \leq 2^{11}$ and that the elapsed time for critical batch size $b^\star = 2^{11}$ was the shortest. The elapsed time for $b \geq 2^{12}$ increased with the SFO complexity, as shown in Figure \ref{fig2}
(see Tables \ref{table:sgd}, \ref{table:momentum}, \ref{table:adam1}, and \ref{table:adam2} in Appendix \ref{appendix:1}).

\subsection{ResNet-18 on the MNIST dataset}
\begin{figure*}[htbp]
\begin{tabular}{cc}
\begin{minipage}[t]{0.45\hsize}
\centering
\includegraphics[width=1\textwidth]{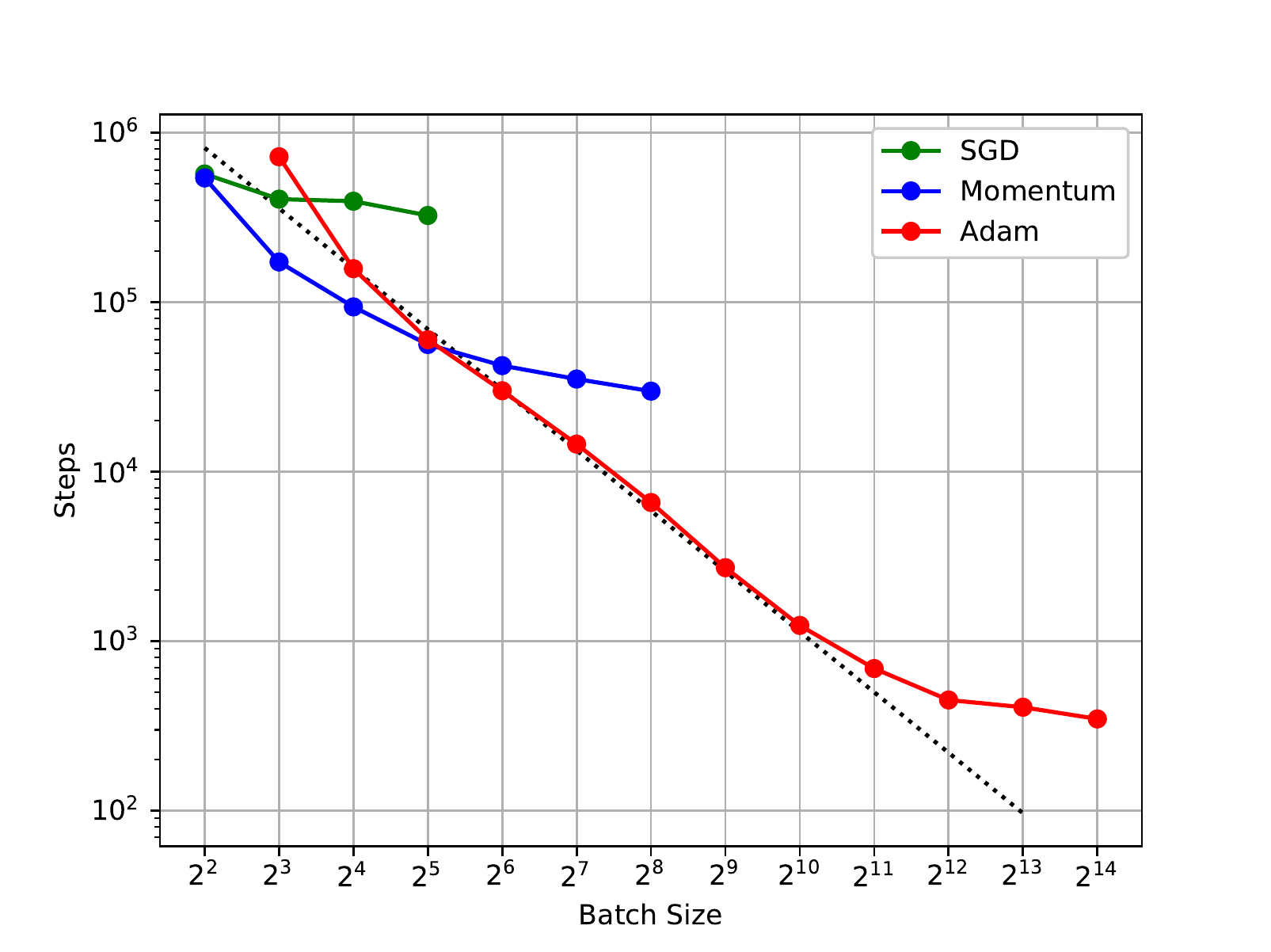}
\caption{Number of steps for SGD, Momentum, and Adam versus batch size needed to train ResNet-18 on MNIST. There is an initial period of perfect scaling (indicated by dashed line) such that the number of steps $K$ for Adam is inversely proportional to batch size $b$. Adam has critical batch size $b^\star = 2^{10}$.}
\label{fig3}
\end{minipage} &
\begin{minipage}[t]{0.45\hsize}
\centering
\includegraphics[width=1\textwidth]{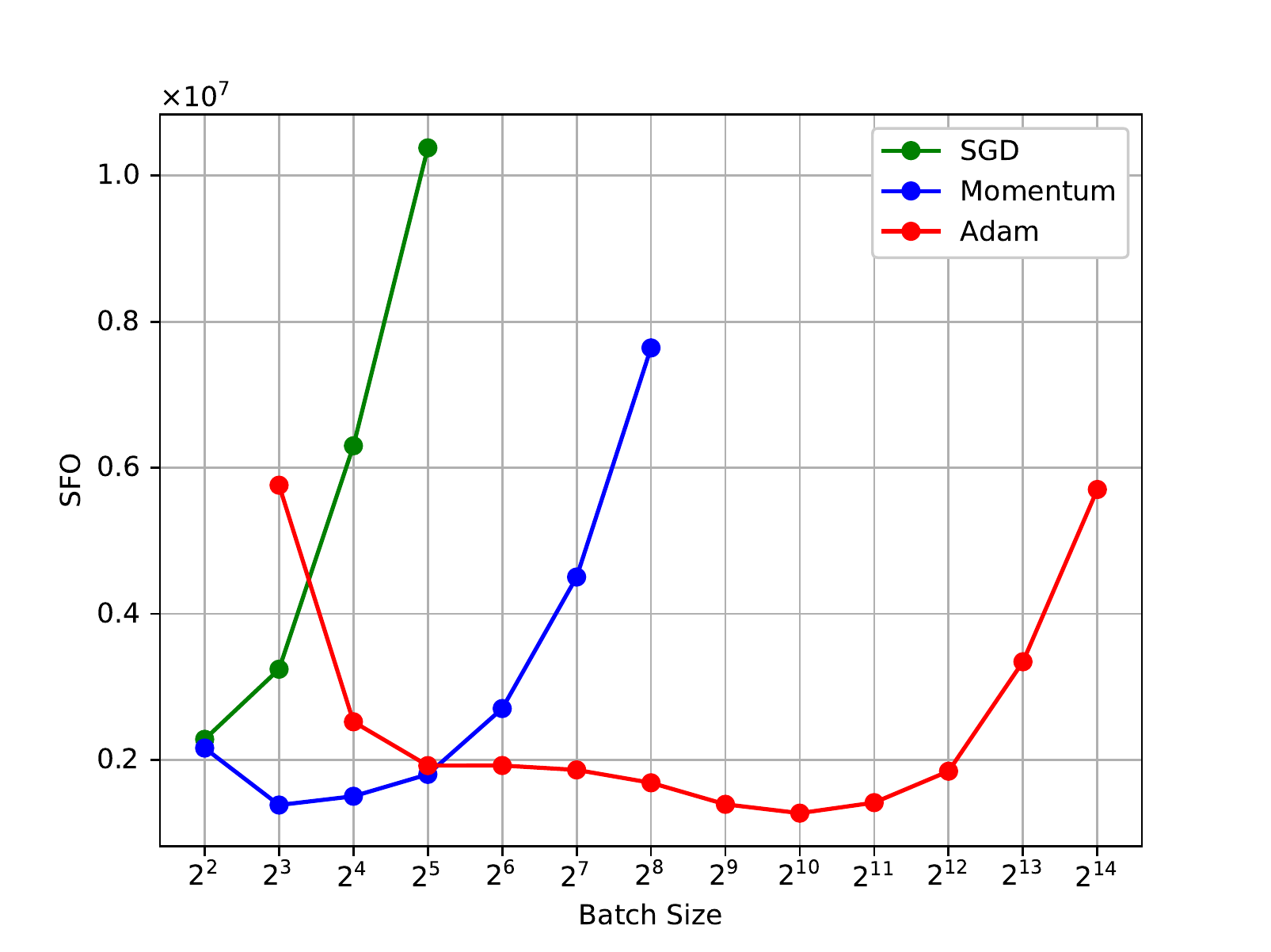}
\caption{SFO complexities for SGD, Momentum, and Adam versus batch size needed to train ResNet-18 on MNIST. 
SFO complexity of Adam (resp. Momentum) is minimized at critical batch size $b^\star = 2^{10}$ (resp. $b^\star = 2^3$), whereas SFO complexity for SGD tends to increase with batch size.}
\label{fig4}
\end{minipage}
\end{tabular}
\end{figure*}

Let us consider training ResNet-18 on the MNIST dataset with $n = 60000$.
Figures \ref{fig3} and \ref{fig4} indicate that Adam could exploit larger batch sizes than SGD and Momentum.
Moreover, these figures indicate that Momentum minimized SFO complexity at the critical batch size $b^\star = 2^3$ and
Adam minimized SFO complexity at the critical batch size $b^\star = 2^{10}$ that was close to estimation of critical batch 
$b_{\mathrm{A}}^\star > b_{\mathrm{A}}^*$ with $2^{11} < b_{\mathrm{A}}^* < 2^{12}$, as shown in (\ref{cbs_s_m_a_2}).
We can also check that the elapsed time for critical batch size $b^\star = 2^{10}$ was the shortest 
(see Tables \ref{table:sgd_m}, \ref{table:momentum_m}, \ref{table:adam1_m}, and \ref{table:adam2_m} in Appendix \ref{appendix:1}).

%We also investigated the behaviors of SGD, Momentum, and Adam for a decaying learning rate $\alpha_k = 10^{-3}/\sqrt{k}$ and for a decaying learning rate$\alpha_k$ defined piecewise by $\alpha_k = 10^{-2} - 10^{-2} (1-\alpha) k T^{-1}$ ($k \leq T$) or $\alpha_k = 10^{-2} \alpha$ ($k > T$), where the decay factor was $\alpha = 10^{-3}$ and the number of training steps was $T = 10^4$ ($\alpha$ and $T$ were set on the basis of the results of a previous study \cite[Figure 14]{shallue2019}). 
%None of the optimizers achieved $f(\bm{\theta}_K) \leq 10^{-1}$ before the stopping condition ($200$ epochs) was reached.

\section{Conclusion and Future Work}\label{sec:5}
This paper showed the relationship between batch size $b$ and the number of steps $K$ to achieve an $\epsilon$-approximation of deep learning optimizers using a small constant learning rate $\alpha$ and hyperparameters $\beta_1$ and $\beta_2$ close to $1$. 
From the convexity of SFO complexity $Kb$, there exists a global minimizer $b^\star$ of $Kb$ that is the critical batch size.  
We also gave numerical results indicating that Adam using a small constant learning rate, hyperparameters close to one, and the critical batch size minimizing SFO complexity has faster convergence than Momentum and SGD.
Moreover, we estimated appropriate batch sizes from our formula for $b^\star$ and showed that actual critical batch sizes are close to estimated batch sizes.

This paper focused on the theoretical analyses of SGD, Momentum, and Adam.
In the future, we should consider developing theoretical analyses of Adam's variants, such as Yogi, AMSGrad, AdaBelief, and AdamW.

%\subsubsection*{Author Contributions}
%If you'd like to, you may include  a section for author contributions as is done
%in many journals. This is optional and at the discretion of the authors.

%\subsubsection*{Acknowledgments}
%Use unnumbered third level headings for the acknowledgments. All
%acknowledgments, including those to funding agencies, go at the end of the paper.

%\bibliography{iclr2023_conference}
%\bibliographystyle{spmpsci}

\appendix
\section{Appendix}\label{appendix:1}
Unless stated otherwise, all relationships between random variables are
supposed to hold almost surely. Let $S$ be a positive definite matrix, which is denoted by $S \in \mathbb{S}_{++}^d$. The $S$-inner product of $\mathbb{R}^d$ is defined for all $\bm{x}, \bm{y} \in \mathbb{R}^d$ by $\langle \bm{x},\bm{y} \rangle_S := \langle \bm{x}, S \bm{y} \rangle = \bm{x}^\top (S \bm{y})$, and the $S$-norm is defined by $\|\bm{x}\|_S := \sqrt{\langle \bm{x}, S \bm{x} \rangle}$.

\subsection{Lemmas}
\begin{lem}\label{lem:1}
Suppose that (C1), (C2)(\ref{gradient}), and (C3) hold. Then, Adam satisfies the following: for all $k\in\mathbb{N}$ and all $\bm{\theta} \in \mathbb{R}^d$,
\begin{align*}
\mathbb{E}\left[ \left\| \bm{\theta}_{k+1} - \bm{\theta} \right\|_{\mathsf{H}_k}^2 \right]
&=
\mathbb{E}\left[\left\| \bm{\theta}_{k} - \bm{\theta} \right\|_{\mathsf{H}_k}^2 \right]
+ \alpha^2 \mathbb{E}\left[\left\| \bm{\mathsf{d}}_k \right\|_{\mathsf{H}_k}^2 \right]\\
&\quad + 2 \alpha \left\{
\frac{\beta_{1}}{\tilde{\beta}_{1k}} 
\mathbb{E}\left[ (\bm{\theta} - \bm{\theta}_k)^\top \bm{m}_{k-1} \right] 
+\frac{\hat{\beta}_{1}}{\tilde{\beta}_{1k}} 
\mathbb{E}\left[ (\bm{\theta} - \bm{\theta}_k)^\top \nabla f (\bm{\theta}_k) \right]
\right\},
\end{align*}
where $\bm{\mathsf{d}}_k := - \mathsf{H}_k^{-1} \hat{\bm{m}}_k$, $\hat{\beta}_{1} := 1 - \beta_{1}$, and $\tilde{\beta}_{1k} := 1 - \beta_{1}^{k+1}$.
\end{lem}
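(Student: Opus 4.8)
The plan is to obtain the identity by a purely algebraic expansion of the squared $\mathsf{H}_k$-norm of the Adam update, followed by a single conditioning argument to handle the stochastic gradient. Write step~9 of Algorithm~\ref{algo:1} (with constant learning rate $\alpha_k = \alpha$) as $\bm{\theta}_{k+1} - \bm{\theta} = (\bm{\theta}_k - \bm{\theta}) + \alpha \bm{\mathsf{d}}_k$ with $\bm{\mathsf{d}}_k = -\mathsf{H}_k^{-1}\hat{\bm{m}}_k$. Since $\mathsf{H}_k = \mathsf{diag}(\sqrt{\hat v_{k,i}}) \in \mathbb{S}_{++}^d$, the $\mathsf{H}_k$-inner product and $\mathsf{H}_k$-norm are well defined, and expanding $\|\cdot\|_{\mathsf{H}_k}^2$ yields
\[
\left\| \bm{\theta}_{k+1} - \bm{\theta} \right\|_{\mathsf{H}_k}^2 = \left\| \bm{\theta}_{k} - \bm{\theta} \right\|_{\mathsf{H}_k}^2 + \alpha^2 \left\| \bm{\mathsf{d}}_k \right\|_{\mathsf{H}_k}^2 + 2\alpha \langle \bm{\theta}_k - \bm{\theta}, \bm{\mathsf{d}}_k \rangle_{\mathsf{H}_k}.
\]
For the cross term, using $\langle \bm{x},\bm{y}\rangle_{\mathsf{H}_k} = \bm{x}^\top \mathsf{H}_k \bm{y}$ gives $\langle \bm{\theta}_k - \bm{\theta}, \bm{\mathsf{d}}_k\rangle_{\mathsf{H}_k} = -(\bm{\theta}_k - \bm{\theta})^\top \mathsf{H}_k \mathsf{H}_k^{-1}\hat{\bm{m}}_k = (\bm{\theta} - \bm{\theta}_k)^\top \hat{\bm{m}}_k$, so the matrix $\mathsf{H}_k$ cancels out of the linear term and only the bias-corrected first moment remains there.

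Next I would unfold $\hat{\bm{m}}_k = \tilde{\beta}_{1k}^{-1}\bm{m}_k$ (step~5) together with $\bm{m}_k = \beta_1 \bm{m}_{k-1} + \hat{\beta}_1 \nabla f_{B_k}(\bm{\theta}_k)$ (step~4), which gives
\[
(\bm{\theta} - \bm{\theta}_k)^\top \hat{\bm{m}}_k = \frac{\beta_1}{\tilde{\beta}_{1k}} (\bm{\theta} - \bm{\theta}_k)^\top \bm{m}_{k-1} + \frac{\hat{\beta}_1}{\tilde{\beta}_{1k}} (\bm{\theta} - \bm{\theta}_k)^\top \nabla f_{B_k}(\bm{\theta}_k).
\]
Taking the full expectation of the expanded identity, the term $\mathbb{E}[(\bm{\theta} - \bm{\theta}_k)^\top \bm{m}_{k-1}]$ is carried along unchanged, and the only work is to rewrite $\mathbb{E}[(\bm{\theta} - \bm{\theta}_k)^\top \nabla f_{B_k}(\bm{\theta}_k)]$. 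I would condition on the history $\sigma$-algebra generated by the samples drawn through iteration $k-1$: both $\bm{\theta}_k$ and $\bm{m}_{k-1}$ are measurable with respect to it, whereas by (C3) and the unbiasedness (C2)(\ref{gradient}) one has $\mathbb{E}[\nabla f_{B_k}(\bm{\theta}_k)\mid \cdot] = b^{-1}\sum_{i\in[b]} \mathbb{E}_{\xi_{k,i}}[\mathsf{G}_{\xi_{k,i}}(\bm{\theta}_k)] = \nabla f(\bm{\theta}_k)$ because the fresh samples $\xi_{k,i}$ are independent of $\bm{\theta}_k$. Hence, by the tower property, $\mathbb{E}[(\bm{\theta} - \bm{\theta}_k)^\top \nabla f_{B_k}(\bm{\theta}_k)] = \mathbb{E}[(\bm{\theta} - \bm{\theta}_k)^\top \nabla f(\bm{\theta}_k)]$. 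Substituting this back and collecting the $2\alpha$ factor produces exactly the claimed equality.

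There is no genuine obstacle in this lemma: it is a deterministic expansion of a squared norm plus one application of the tower property, and (C1) is used only to guarantee that $\nabla f$ exists. The one point that must be handled carefully is the measurability bookkeeping — namely, that at the instant batch $B_k$ is sampled the quantities $\bm{\theta}_k$ and $\bm{m}_{k-1}$ are already determined, so that the unbiasedness in (C2) may be applied to $\nabla f_{B_k}(\bm{\theta}_k)$ while treating $\bm{\theta}_k$ and $\bm{m}_{k-1}$ as constants inside the conditional expectation. Everything else is routine linear algebra in the $\mathsf{H}_k$-geometry.
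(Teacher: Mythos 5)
Your proposal is correct and follows essentially the same route as the paper's proof: expand the squared $\mathsf{H}_k$-norm of the update, reduce the cross term to $(\bm{\theta}-\bm{\theta}_k)^\top\hat{\bm{m}}_k$ and unfold it via the definitions of $\hat{\bm{m}}_k$ and $\bm{m}_k$, then replace $\nabla f_{B_k}(\bm{\theta}_k)$ by $\nabla f(\bm{\theta}_k)$ inside the expectation using (C2)--(C3) and the tower property. Conditioning on the full history rather than on $\bm{\theta}_k$ alone is an immaterial (if slightly more careful) variant of the paper's conditioning step.
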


\begin{proof}
Let $\bm{\theta} \in \mathbb{R}^d$ and $k\in\mathbb{N}$. The definition $\bm{\theta}_{k+1} := \bm{\theta}_{k} + \alpha \bm{\mathsf{d}}_k$ implies that 
\begin{align*}
\| \bm{\theta}_{k+1} - \bm{\theta} \|_{\mathsf{H}_k}^2
= 
\| \bm{\theta}_{k} - \bm{\theta} \|_{\mathsf{H}_k}^2
+ 2 \alpha \langle \bm{\theta}_{k} - \bm{\theta}, \bm{\mathsf{d}}_k \rangle_{\mathsf{H}_k}
+ \alpha^2 \|\bm{\mathsf{d}}_k\|_{\mathsf{H}_k}^2.
\end{align*}
Moreover, the definitions of $\bm{\mathsf{d}}_k$, $\bm{m}_k$, and $\hat{\bm{m}}_k$ ensure that 
\begin{align*}
\left\langle \bm{\theta}_k - \bm{\theta}, \bm{\mathsf{d}}_k \right\rangle_{\mathsf{H}_k}
&=
\left\langle \bm{\theta}_k - \bm{\theta}, \mathsf{H}_k \bm{\mathsf{d}}_k \right\rangle
=
\left\langle \bm{\theta} - \bm{\theta}_k, \hat{\bm{m}}_k \right\rangle
=
\frac{1}{\tilde{\beta}_{1k}}
(\bm{\theta} - \bm{\theta}_k)^\top {\bm{m}}_k\\ 
&=
\frac{\beta_{1}}{\tilde{\beta}_{1k}} 
(\bm{\theta} - \bm{\theta}_k)^\top \bm{m}_{k-1} 
+
\frac{\hat{\beta}_{1}}{\tilde{\beta}_{1k}} 
(\bm{\theta} - \bm{\theta}_k)^\top \nabla f_{B_k}(\bm{\theta}_k).
\end{align*}
Hence, 
\begin{align}\label{ineq:004}
\begin{split}
\left\|\bm{\theta}_{k+1} - \bm{\theta} \right\|_{\mathsf{H}_k}^2
&=
\left\| \bm{\theta}_k -\bm{\theta} \right\|_{\mathsf{H}_k}^2
+ \alpha^2 \left\| \bm{\mathsf{d}}_k \right\|_{\mathsf{H}_k}^2\\
&\quad + 2 \alpha \left\{
\frac{\beta_{1}}{\tilde{\beta}_{1k}} 
(\bm{\theta} - \bm{\theta}_k)^\top \bm{m}_{k-1} 
+ \frac{\hat{\beta}_{1}}{\tilde{\beta}_{1k}} 
(\bm{\theta} - \bm{\theta}_k)^\top \nabla f_{B_k} (\bm{\theta}_k) 
\right\}.
\end{split}
\end{align}
Conditions (\ref{gradient}) and (C3) guarantee that
\begin{align*}
\mathbb{E}\left[ \mathbb{E} \left[(\bm{\theta} - \bm{\theta}_k)^\top \nabla f_{B_k} (\bm{\theta}_k) \Big| \bm{\theta}_k \right] \right]
=
\mathbb{E} \left[(\bm{\theta} - \bm{\theta}_k)^\top 
\mathbb{E} \left[\nabla f_{B_k} (\bm{\theta}_k) \Big| \bm{\theta}_k \right] \right]
=
\mathbb{E} \left[(\bm{\theta} - \bm{\theta}_k)^\top 
\nabla f (\bm{\theta}_k) \right].
\end{align*}
Therefore, the lemma follows from taking the expectation on both sides of (\ref{ineq:004}). This completes the proof.
\end{proof}

The discussion in the proof of Lemma \ref{lem:1} also gives the following lemma.

\begin{lem}\label{lem:1_sgd}
Suppose that (C1), (C2)(\ref{gradient}), and (C3) hold. 
Then, SGD satisfies the following: for all $k\in\mathbb{N}$ and all $\bm{\theta} \in \mathbb{R}^d$,
\begin{align*}
\mathbb{E}\left[ \left\| \bm{\theta}_{k+1} - \bm{\theta} \right\|^2 \right]
&=
\mathbb{E}\left[\left\| \bm{\theta}_{k} - \bm{\theta} \right\|^2 \right]
+ \alpha^2 \mathbb{E}\left[\left\| \nabla f_{B_k} (\bm{\theta}_k) \right\|^2 \right]
+ 2 \alpha  
\mathbb{E}\left[ (\bm{\theta} - \bm{\theta}_k)^\top \nabla f (\bm{\theta}_k) \right].
\end{align*}
Moreover, Momentum satisfies the following: for all $k\in\mathbb{N}$ and all $\bm{\theta} \in \mathbb{R}^d$,
\begin{align*}
\mathbb{E}\left[ \left\| \bm{\theta}_{k+1} - \bm{\theta} \right\|^2 \right]
&=
\mathbb{E}\left[\left\| \bm{\theta}_{k} - \bm{\theta} \right\|^2 \right]
+ \alpha^2 \mathbb{E}\left[\left\| \bm{m}_k \right\|^2 \right]\\
&\quad + 2 \alpha \left\{
\beta_{1} 
\mathbb{E}\left[ (\bm{\theta} - \bm{\theta}_k)^\top \bm{m}_{k-1} \right] 
+\hat{\beta}_{1} 
\mathbb{E}\left[ (\bm{\theta} - \bm{\theta}_k)^\top \nabla f (\bm{\theta}_k) \right]
\right\},
\end{align*}
where $\hat{\beta}_{1} := 1 - \beta_{1}$.
\end{lem}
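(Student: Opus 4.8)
The plan is to prove both identities by the same elementary two-step recipe already used in the proof of Lemma \ref{lem:1}: expand the squared update displacement with the polarization identity, then strip the stochasticity of the fresh minibatch gradient via the tower property. Because neither optimizer here carries a nontrivial preconditioner, every norm is the plain Euclidean norm and the bias-correction factors $\tilde{\beta}_{1k}$ that appear in Lemma \ref{lem:1} are absent, so the identities are the natural simplifications of (\ref{ineq:004}).

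For SGD I would start from the update $\bm{\theta}_{k+1} = \bm{\theta}_k - \alpha \nabla f_{B_k}(\bm{\theta}_k)$ (Algorithm \ref{algo:1} with $\beta_1 = 0$ and $\mathsf{H}_k$ the identity). Writing $\bm{\theta}_{k+1} - \bm{\theta} = (\bm{\theta}_k - \bm{\theta}) - \alpha \nabla f_{B_k}(\bm{\theta}_k)$ and applying $\|\bm{a} - \bm{b}\|^2 = \|\bm{a}\|^2 - 2\langle \bm{a}, \bm{b}\rangle + \|\bm{b}\|^2$ gives
\begin{align*}
\|\bm{\theta}_{k+1} - \bm{\theta}\|^2 = \|\bm{\theta}_k - \bm{\theta}\|^2 + 2\alpha (\bm{\theta} - \bm{\theta}_k)^\top \nabla f_{B_k}(\bm{\theta}_k) + \alpha^2 \|\nabla f_{B_k}(\bm{\theta}_k)\|^2.
\end{align*}
Taking expectations and, exactly as in Lemma \ref{lem:1}, conditioning the cross term on $\bm{\theta}_k$ — using (C2)(\ref{gradient}), $\mathbb{E}_{\xi_k}[\nabla f_{B_k}(\bm{\theta}_k)\mid \bm{\theta}_k] = \nabla f(\bm{\theta}_k)$, together with (C3) — replaces $\nabla f_{B_k}(\bm{\theta}_k)$ by $\nabla f(\bm{\theta}_k)$ in that term and yields the SGD identity.

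For Momentum I would use the update $\bm{\theta}_{k+1} = \bm{\theta}_k - \alpha \bm{m}_k$ with $\bm{m}_k = \beta_1 \bm{m}_{k-1} + \hat{\beta}_1 \nabla f_{B_k}(\bm{\theta}_k)$ (the $\mathsf{H}_k = I$ specialization, with $\bm{m}_k$ playing the role of the search direction). The same expansion gives
\begin{align*}
\|\bm{\theta}_{k+1} - \bm{\theta}\|^2 = \|\bm{\theta}_k - \bm{\theta}\|^2 + 2\alpha (\bm{\theta} - \bm{\theta}_k)^\top \bm{m}_k + \alpha^2 \|\bm{m}_k\|^2,
\end{align*}
and substituting the recursion for $\bm{m}_k$ splits the cross term as $\beta_1 (\bm{\theta}-\bm{\theta}_k)^\top \bm{m}_{k-1} + \hat{\beta}_1 (\bm{\theta}-\bm{\theta}_k)^\top \nabla f_{B_k}(\bm{\theta}_k)$. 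Taking expectations and applying the tower property to the second piece, again via (C2)(\ref{gradient}), turns $\nabla f_{B_k}(\bm{\theta}_k)$ into $\nabla f(\bm{\theta}_k)$ while leaving the $\bm{m}_{k-1}$ piece untouched, which is the stated Momentum identity.

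The computation is entirely routine; the only point deserving care is the conditional-expectation step for the cross terms, and I expect this to be the sole place where the hypotheses actually bite. Its validity rests on the independence structure in (C2): the sample $\xi_k$ is independent of $(\bm{\theta}_l)_{l=0}^k$, so both $\bm{\theta}_k - \bm{\theta}$ and $\bm{m}_{k-1}$ (which is built only from $\xi_0,\ldots,\xi_{k-1}$) are measurable with respect to the past and pass outside $\mathbb{E}_{\xi_k}[\,\cdot\mid \bm{\theta}_k]$ unchanged, whereas only $\nabla f_{B_k}(\bm{\theta}_k)$ is averaged and it averages to $\nabla f(\bm{\theta}_k)$. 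This is precisely why the $\bm{m}_{k-1}$ term must stay in its minibatch-free form in the Momentum identity while the fresh-gradient term becomes a full-gradient term — the asymmetry recorded in the statement.
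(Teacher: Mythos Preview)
Your proposal is correct and follows exactly the approach the paper indicates: the paper itself does not write out a separate proof but simply says that the discussion in the proof of Lemma~\ref{lem:1} also gives Lemma~\ref{lem:1_sgd}, and your expansion-plus-tower-property argument is precisely that discussion specialized to $\mathsf{H}_k = I$ (with the bias correction absent). The care you take with the conditional-expectation step, including why the $\bm{m}_{k-1}$ piece stays untouched, is exactly the point that matters and matches the paper's reasoning.
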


The following lemma indicates the bounds on 
$(\mathbb{E}[ \|\bm{m}_k\|^2 ])_{k\in\mathbb{N}}$ and 
$(\mathbb{E}[ \|\bm{\mathsf{d}}_k \|_{\mathsf{H}_k}^2])_{k\in\mathbb{N}}$.

\begin{lem}\label{lem:bdd}
Adam under (C2)(\ref{gradient}), (\ref{sigma}), and (A1), for all $k\in\mathbb{N}$ satisfies
\begin{align*}
\mathbb{E}\left[ \left\|\bm{m}_k \right\|^2 \right] 
\leq 
\frac{\sigma^2}{b} + G^2, \quad 
\mathbb{E}\left[ \left\|\bm{\mathsf{d}}_k \right\|_{\mathsf{H}_k}^2 \right] 
\leq 
\frac{\sqrt{\tilde{\beta}_{2k}}}{\tilde{\beta}_{1k}^2 \sqrt{{v}_*}} \left( \frac{\sigma^2}{b} + G^2 \right),
\end{align*}
where ${v}_* := \inf \{ \min_{i\in [d]} {v}_{k,i} \colon k\in \mathbb{N}\}$, $\tilde{\beta}_{1k} := 1 - \beta_{1k}^{k+1}$, and $\tilde{\beta}_{2k} := 1 - \beta_{2k}^{k+1}$.
\end{lem}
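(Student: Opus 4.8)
The plan is to derive both inequalities directly from the defining recursions in Algorithm~\ref{algo:1} together with (C2)(\ref{gradient}), (C2)(\ref{sigma}), and (A1); no Lipschitz-type hypothesis is needed.

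For the bound on $\mathbb{E}[\|\bm{m}_k\|^2]$, I would first unroll the momentum recursion. Since $\bm{m}_{-1} = \bm{0}$, an induction on $k$ gives $\bm{m}_k = (1-\beta_1)\sum_{j=0}^{k}\beta_1^{k-j}\nabla f_{B_j}(\bm{\theta}_j)$, and the coefficients $w_j := (1-\beta_1)\beta_1^{k-j}$ satisfy $\sum_{j=0}^{k} w_j = 1-\beta_1^{k+1} = \tilde{\beta}_{1k} \leq 1$. Because $\|\cdot\|^2$ is convex and the $w_j$ are nonnegative with sum at most one, Jensen's inequality yields $\|\bm{m}_k\|^2 \leq \sum_{j=0}^{k} w_j \|\nabla f_{B_j}(\bm{\theta}_j)\|^2$. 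It then remains to bound each $\mathbb{E}[\|\nabla f_{B_j}(\bm{\theta}_j)\|^2]$. Conditioning on $\bm{\theta}_j$ and expanding $\nabla f_{B_j}(\bm{\theta}_j) = b^{-1}\sum_{i\in[b]}\mathsf{G}_{\xi_{j,i}}(\bm{\theta}_j)$ as in (C3), the unbiasedness in (\ref{gradient}) makes the bias-variance cross term vanish, and the within-batch independence of $\xi_{j,1},\ldots,\xi_{j,b}$ kills the $b(b-1)$ off-diagonal terms; combined with (\ref{sigma}) this gives $\mathbb{E}[\|\nabla f_{B_j}(\bm{\theta}_j)-\nabla f(\bm{\theta}_j)\|^2] \leq \sigma^2/b$. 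Adding $\mathbb{E}[\|\nabla f(\bm{\theta}_j)\|^2] \leq G^2$ from (A1) yields $\mathbb{E}[\|\nabla f_{B_j}(\bm{\theta}_j)\|^2] \leq \sigma^2/b + G^2$, and summing over $j$ with $\sum_j w_j \leq 1$ completes the first estimate.

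For the bound on $\mathbb{E}[\|\bm{\mathsf{d}}_k\|_{\mathsf{H}_k}^2]$, I would compute the $\mathsf{H}_k$-norm explicitly. Since $\bm{\mathsf{d}}_k = -\mathsf{H}_k^{-1}\hat{\bm{m}}_k$ and $\mathsf{H}_k = \mathsf{diag}(\sqrt{\hat{v}_{k,i}})$, we have $\|\bm{\mathsf{d}}_k\|_{\mathsf{H}_k}^2 = \hat{\bm{m}}_k^\top \mathsf{H}_k^{-1}\hat{\bm{m}}_k = \sum_{i\in[d]}\hat{m}_{k,i}^2/\sqrt{\hat{v}_{k,i}}$. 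From $\hat{\bm{v}}_k = \bm{v}_k/\tilde{\beta}_{2k}$ we get $1/\sqrt{\hat{v}_{k,i}} = \sqrt{\tilde{\beta}_{2k}}/\sqrt{v_{k,i}} \leq \sqrt{\tilde{\beta}_{2k}}/\sqrt{v_*}$ by the definition of $v_*$, hence $\|\bm{\mathsf{d}}_k\|_{\mathsf{H}_k}^2 \leq (\sqrt{\tilde{\beta}_{2k}}/\sqrt{v_*})\|\hat{\bm{m}}_k\|^2$. Using $\hat{\bm{m}}_k = \bm{m}_k/\tilde{\beta}_{1k}$ this equals $(\sqrt{\tilde{\beta}_{2k}}/(\tilde{\beta}_{1k}^2\sqrt{v_*}))\|\bm{m}_k\|^2$; taking expectations and inserting the first bound gives the second.

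There is no substantive obstacle here. The only points that need care are verifying that Jensen's inequality still applies when the weights sum to at most one rather than exactly one---which is handled by normalizing the $w_j$ and absorbing the leftover factor $\tilde{\beta}_{1k}\leq 1$---and keeping track of the $1/b$ gain in the mini-batch variance, which genuinely relies on the within-batch independence assumed in (C3) so that the cross terms cancel in expectation.
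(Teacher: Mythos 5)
Your proposal is correct and follows essentially the same route as the paper: bound $\mathbb{E}[\|\nabla f_{B_k}(\bm{\theta}_k)\|^2]$ by $\sigma^2/b + G^2$ via unbiasedness, within-batch independence, and (A1), propagate this through the exponential moving average to bound $\mathbb{E}[\|\bm{m}_k\|^2]$, and then use $\hat{v}_{k,i} = v_{k,i}/\tilde{\beta}_{2k} \geq v_*/\tilde{\beta}_{2k}$ together with $\hat{\bm{m}}_k = \bm{m}_k/\tilde{\beta}_{1k}$ for the $\mathsf{H}_k$-norm bound. The only cosmetic differences are that you unroll the momentum recursion and apply Jensen to the weighted sum where the paper argues by convexity and induction on the recursion, and you bound the preconditioner coordinatewise where the paper works through the matrix square root $\overline{\mathsf{H}}_k$ and its operator norm; both yield the identical estimates.
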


\begin{proof}
Condition (C2)(\ref{gradient}) implies that
\begin{align}\label{equation_1}
\begin{split}
\mathbb{E} \left[ \left\| \nabla f_{B_k} (\bm{\theta}_{k}) \right\|^2
\Big| \bm{\theta}_k
\right]
&=
\mathbb{E} \left[\left\| \nabla f_{B_k} (\bm{\theta}_{k}) 
- \nabla f (\bm{\theta}_{k}) + \nabla f (\bm{\theta}_{k}) \right\|^2
\Big| \bm{\theta}_k
\right]\\
&=
\mathbb{E} \left[ \left\| \nabla f_{B_k} (\bm{\theta}_{k}) 
- \nabla f (\bm{\theta}_{k}) \right\|^2 \Big| \bm{\theta}_k
\right]
+ 
\mathbb{E} \left[ \left\| \nabla f (\bm{\theta}_{k}) \right\|^2 \Big| \bm{\theta}_k
\right]\\
&\quad + 2 
\mathbb{E} \left[ 
(\nabla f_{B_k} (\bm{\theta}_{k}) 
- \nabla f (\bm{\theta}_{k}))^\top \nabla f (\bm{\theta}_{k})
\Big| \bm{\theta}_k \right]\\
&= 
\mathbb{E} \left[\left\| \nabla f_{B_k} (\bm{\theta}_{k}) 
- \nabla f (\bm{\theta}_{k}) \right\|^2 \Big| \bm{\theta}_k
\right]
+ 
\| \nabla f (\bm{\theta}_{k}) \|^2,
\end{split}
\end{align} 
which, together with (C2)(\ref{sigma}) and (A1), in turn implies that 
\begin{align}\label{A3}
\mathbb{E} \left[ \left\| \nabla f_{B_k} (\bm{\theta}_{k}) \right\|^2
\right]
\leq 
\frac{\sigma^2}{b} + G^2.
\end{align}
The convexity of $\|\cdot\|^2$, together with the definition of $\bm{m}_k$ and (\ref{A3}), guarantees that, for all $k\in\mathbb{N}$,
\begin{align*}
\mathbb{E}\left[ \left\|\bm{m}_k \right\|^2 \right]
&\leq \beta_{1} \mathbb{E}\left[ \left\|\bm{m}_{k-1} \right\|^2 \right] + 
\hat{\beta}_{1} \mathbb{E}\left[ \left\|\nabla f_{B_k} (\bm{\theta}_k) \right\|^2 \right]\\
&\leq 
\beta_{1} \mathbb{E} \left[ \left\|\bm{m}_{k-1} \right\|^2 \right] + \hat{\beta}_{1} 
\left( \frac{\sigma^2}{b} + G^2 \right).
\end{align*}
Induction thus ensures that, for all $k\in\mathbb{N}$,
\begin{align}\label{induction}
\mathbb{E} \left[ \left\|\bm{m}_k \right\|^2 \right] \leq 
\max \left\{ \|\bm{m}_{-1}\|^2, \frac{\sigma^2}{b} + G^2 \right\} 
= \frac{\sigma^2}{b} + G^2,
\end{align}
where $\bm{m}_{-1} = \bm{0}$. For $k\in\mathbb{N}$, $\mathsf{H}_k \in \mathbb{S}_{++}^d$ guarantees the existence of a unique matrix $\overline{\mathsf{H}}_k \in \mathbb{S}_{++}^d$ such that $\mathsf{H}_k = \overline{\mathsf{H}}_k^2$ \cite[Theorem 7.2.6]{horn}. 
We have that, for all $\bm{x}\in\mathbb{R}^d$, $\|\bm{x}\|_{\mathsf{H}_k}^2 = \| \overline{\mathsf{H}}_k \bm{x} \|^2$. Accordingly, the definitions of $\bm{\mathsf{d}}_k$ and $\hat{\bm{m}}_k$ imply that, for all $k\in\mathbb{N}$, 
\begin{align*}
\mathbb{E} \left[ \left\| \bm{\mathsf{d}}_k \right\|_{\mathsf{H}_k}^2 \right]
= 
\mathbb{E} \left[ \left\| \overline{\mathsf{H}}_k^{-1} \mathsf{H}_k\bm{\mathsf{d}}_k \right\|^2 \right]
\leq 
\frac{1}{\tilde{\beta}_{1k}^2} \mathbb{E} \left[ \left\| \overline{\mathsf{H}}_k^{-1} \right\|^2 \|\bm{m}_k \|^2 \right],
\end{align*}
where 
\begin{align*}
\left\| \overline{\mathsf{H}}_k^{-1} \right\| 
= \left\| \mathsf{diag}\left(\hat{v}_{k,i}^{-\frac{1}{4}} \right) \right\| 
= \max_{i\in [d]} \hat{v}_{k,i}^{-\frac{1}{4}}
= \max_{i\in [d]} \left( \frac{v_{k,i}}{\tilde{\beta}_{2k}} \right)^{-\frac{1}{4}} 
=:
\left( \frac{v_{k,i^*}}{\tilde{\beta}_{2k}} \right)^{-\frac{1}{4}}. 
\end{align*} 
Moreover, the definition of 
\begin{align*}
{v}_* := \inf \left\{ {v}_{k,i^*} \colon k\in \mathbb{N} \right\}
\end{align*} 
and (\ref{induction}) imply that, for all $k\in \mathbb{N}$,
\begin{align*}
\mathbb{E} \left[ \left\| \bm{\mathsf{d}}_k \right\|_{\mathsf{H}_k}^2 \right] \leq 
\frac{\tilde{\beta}_{2k}^{\frac{1}{2}}}{\tilde{\beta}_{1k}^2 {v}_*^{\frac{1}{2}}} \left( \frac{\sigma^2}{b} + G^2 \right),
\end{align*}
completing the proof.
\end{proof}

\subsection{SGD}
We show Theorem \ref{thm:1} for SGD.

\begin{proof}
(i) Lemma \ref{lem:1_sgd} and (\ref{A3}) imply that, for all $k\in\mathbb{N}$ and all $\bm{\theta} \in \mathbb{R}^d$,
\begin{align*}
2 \alpha  
\mathbb{E}\left[ (\bm{\theta}_k - \bm{\theta})^\top \nabla f (\bm{\theta}_k) \right]
&=
\mathbb{E}\left[\left\| \bm{\theta}_{k} - \bm{\theta} \right\|^2 \right]
-
\mathbb{E}\left[ \left\| \bm{\theta}_{k+1} - \bm{\theta} \right\|^2 \right]
+ \alpha^2 \mathbb{E}\left[\left\| \nabla f_{B_k} (\bm{\theta}_k) \right\|^2 \right]\\
&\leq
\mathbb{E}\left[\left\| \bm{\theta}_{k} - \bm{\theta} \right\|^2 \right]
-
\mathbb{E}\left[ \left\| \bm{\theta}_{k+1} - \bm{\theta} \right\|^2 \right]
+ \alpha^2 \left(\frac{\sigma^2}{b} + G^2 \right).
\end{align*}
Summing the above inequality from $k=1$ to $k = K$ leads to the finding that, for all $K \geq 1$,
\begin{align*}
2 \alpha \sum_{k=1}^{K} 
\mathbb{E}\left[ (\bm{\theta}_k - \bm{\theta})^\top \nabla f (\bm{\theta}_k) \right]
\leq
\mathbb{E}\left[\left\| \bm{\theta}_{1} - \bm{\theta} \right\|^2 \right]
-
\mathbb{E}\left[ \left\| \bm{\theta}_{K+1} - \bm{\theta} \right\|^2 \right]
+ \alpha^2 \left(\frac{\sigma^2}{b} + G^2 \right) K,
\end{align*}
which implies that, for all $K \geq 1$ and all $\bm{\theta} \in \mathbb{R}^d$,
\begin{align}\label{vi_sgd}
\begin{split}
\mathrm{VI}(K,\bm{\theta}) 
&:= \frac{1}{K} \sum_{k=1}^{K} 
\mathbb{E}\left[ (\bm{\theta}_k - \bm{\theta})^\top \nabla f (\bm{\theta}_k) \right]
\leq
\frac{\mathbb{E}\left[\| \bm{\theta}_{1} - \bm{\theta}\|^2 \right]
}{2\alpha K}
+ 
\frac{\alpha}{2} \left(\frac{\sigma^2}{b} + G^2 \right)\\
&= 
\underbrace{\frac{\mathbb{E}\left[\| \bm{\theta}_{1} - \bm{\theta}\|^2 \right]
}{2\alpha}}_{C_1} \frac{1}{K}
+ 
\underbrace{\frac{\sigma^2 \alpha}{2}}_{C_2} \frac{1}{b}
+ 
\underbrace{\frac{G^2 \alpha}{2}}_{C_3}.
\end{split}
\end{align}

(ii)
Let $\bm{\theta} \in \mathbb{R}^d$ and $\epsilon > 0$.
Condition $C_1/K + C_2/b + C_3 = \epsilon$ is equivalent to  
\begin{align}\label{k_sgd}
K = K(b) = \frac{C_1 b}{(\epsilon - C_3)b - C_2}.
\end{align}
Since $\epsilon = C_1/K + C_2/b + C_3 > C_3$, 
we consider the case $b > C_2/(\epsilon - C_3) > 0$ to guarantee that $K > 0$.
From (\ref{vi_sgd}), the function $K$ defined by (\ref{k_sgd}) satisfies $\mathrm{VI}(K,\bm{\theta}) \leq C_1/K + C_2/b + C_3 = \epsilon$.
Moreover, from (\ref{k_sgd}),
\begin{align*}
\frac{\mathrm{d} K(b)}{\mathrm{d} b}
= 
\frac{- C_1 C_2}{\{ (\epsilon - C_3) b - C_2 \}^2} \leq 0, \text{ }
\frac{\mathrm{d}^2 K(b)}{\mathrm{d} b^2}
= 
\frac{2 C_1 C_2 (\epsilon - C_3)}{\{(\epsilon - C_3) b - C_2\}^3} \geq 0,
\end{align*}
which implies that $K$ is convex and monotone decreasing with respect to $b$.

(iii) 
We have that 
\begin{align*}%\label{kb_sgd}
K b = K(b)b = \frac{C_1 b^2}{(\epsilon - C_3)b - C_2}.
\end{align*}
Accordingly, 
\begin{align*}
\frac{\mathrm{d} K(b) b}{\mathrm{d} b}
= 
\frac{C_1 b \{(\epsilon - C_3 )b - 2 C_2\}}
{\{(\epsilon - C_3 )b - C_2\}^2}, \text{ } 
\frac{\mathrm{d}^2 K(b) b}{\mathrm{d} b^2}
= 
\frac{2 C_1 C_2^2}{\{(\epsilon - C_3)b - C_2\}^3} \geq 0,
\end{align*}
which implies that $K(b) b$ is convex with respect to $b$ and 
\begin{align*}
\frac{\mathrm{d} K(b) b}{\mathrm{d} b}
\begin{cases}
< 0 &\text{ if } b < b^\star,\\
= 0 &\text{ if } b = b^\star = \frac{2 C_2}{\epsilon - C_3},\\
> 0 &\text{ if } b > b^\star.
\end{cases}
\end{align*}
The point $b^\star$ attains the minimum value 
$K(b^\star) b^\star = 2 C_1 C_2/(\epsilon - C_3)^2$ of $K(b) b$. This completes the proof.
\end{proof}

\subsection{Momentum}
We show Theorem \ref{thm:1} for Momentum.

\begin{proof}
(i) 
Lemma \ref{lem:1_sgd} ensures that, for all $k\in\mathbb{N}$ and all $\bm{\theta} \in \mathbb{R}^d$,
\begin{align*} 
\mathbb{E}\left[ (\bm{\theta}_k - \bm{\theta})^\top \bm{m}_{k-1} \right]
&=
\frac{1}{2 \alpha \beta_{1}}
\left\{
\mathbb{E}\left[\left\| \bm{\theta}_{k} - \bm{\theta} \right\|^2 \right]
-
\mathbb{E}\left[ \left\| \bm{\theta}_{k+1} - \bm{\theta} \right\|^2 \right]
\right\}
+ \frac{\alpha}{2  \beta_{1}} \mathbb{E}\left[\left\| \bm{m}_k \right\|^2 \right]\\
&\quad   
+ \frac{\hat{\beta}_{1}}{\beta_1} 
\mathbb{E}\left[ (\bm{\theta} - \bm{\theta}_k)^\top \nabla f (\bm{\theta}_k) \right],
\end{align*}
which, together with Lemma \ref{lem:bdd}, the Cauchy--Schwarz inequality,
and (A1) and (A2), implies that
\begin{align*}
\mathbb{E}\left[ (\bm{\theta}_k - \bm{\theta})^\top \bm{m}_{k-1} \right]
&\leq
\frac{1}{2 \alpha \beta_{1}}
\left\{
\mathbb{E}\left[\left\| \bm{\theta}_{k} - \bm{\theta} \right\|^2 \right]
-
\mathbb{E}\left[ \left\| \bm{\theta}_{k+1} - \bm{\theta} \right\|^2 \right]
\right\}
+ \frac{\alpha}{2  \beta_{1}} \left( \frac{\sigma^2}{b} + G^2  \right)\\
&\quad   
+\frac{\hat{\beta}_{1}}{\beta_1} \mathrm{Dist}(\bm{\theta}) G.
\end{align*} 
Summing the above inequality from $k=1$ to $k = K$ gives a relation that implies that 
\begin{align*}
\sum_{k=1}^{K} \mathbb{E}\left[ (\bm{\theta}_k - \bm{\theta})^\top \bm{m}_{k-1} \right]
&\leq
\frac{1}{2 \alpha \beta_{1}}
\left\{
\mathbb{E}\left[\left\| \bm{\theta}_{1} - \bm{\theta} \right\|^2 \right]
-
\mathbb{E}\left[ \left\| \bm{\theta}_{K+1} - \bm{\theta} \right\|^2 \right]
\right\}
+ \frac{\alpha}{2  \beta_{1}} \left( \frac{\sigma^2}{b} + G^2  \right)K\\
&\quad   
+\frac{\hat{\beta}_{1}}{\beta_1} \mathrm{Dist}(\bm{\theta}) G K,
\end{align*}
and hence, 
\begin{align*}
\frac{1}{K} \sum_{k=1}^{K} \mathbb{E}\left[ (\bm{\theta}_k - \bm{\theta})^\top \bm{m}_{k-1} \right]
\leq
\frac{\mathbb{E}\left[\| \bm{\theta}_{1} - \bm{\theta}\|^2 \right]}{2 \alpha \beta_{1} K}
+ \frac{\alpha}{2  \beta_{1}} \left( \frac{\sigma^2}{b} + G^2  \right)   
+\frac{\hat{\beta}_{1}}{\beta_1} 
\mathrm{Dist}(\bm{\theta}) G.
\end{align*}
Moreover,  
we have that, for all $k\in\mathbb{N}$ and all $\bm{\theta}\in \mathbb{R}^d$, 
\begin{align*}
(\bm{\theta}_k - \bm{\theta})^\top \bm{m}_{k}
&= 
(\bm{\theta}_k - \bm{\theta})^\top \bm{m}_{k-1} 
+ 
\hat{\beta}_1 (\bm{\theta}_k - \bm{\theta})^\top (\nabla f_{B_k}(\bm{\theta}_k) - \bm{m}_{k-1})\\
&\leq
(\bm{\theta}_k - \bm{\theta})^\top \bm{m}_{k-1} 
+ 
\hat{\beta}_1 \mathrm{Dist}(\bm{\theta}) \left( \|\nabla f_{B_k}(\bm{\theta}_k)\| + \| \bm{m}_{k-1}\|   \right),
\end{align*}
where the first equality comes from the definition of $\bm{m}_k$
and
the first inequality comes from 
the Cauchy--Schwarz inequality, the triangle inequality, and (A2).
Hence, from Lemma \ref{lem:bdd}, (\ref{A3}), Jensen's inequality,
and $b \geq 1$,
\begin{align}\label{M_k}
\begin{split}
\mathbb{E}\left[(\bm{\theta}_k - \bm{\theta})^\top \bm{m}_{k} \right]
&\leq
(\bm{\theta}_k - \bm{\theta})^\top \bm{m}_{k-1} 
+ 
2 \hat{\beta}_1 \mathrm{Dist}(\bm{\theta}) \sqrt{\frac{\sigma^2}{b} + G^2}\\
&\leq
(\bm{\theta}_k - \bm{\theta})^\top \bm{m}_{k-1} 
+ 
2 \hat{\beta}_1 \mathrm{Dist}(\bm{\theta}) \sqrt{\sigma^2 + G^2}.
\end{split}
\end{align}
Therefore, for all $K \geq 1$ and all $\bm{\theta}\in \mathbb{R}^d$, 
\begin{align}\label{m_k}
\begin{split}
&\frac{1}{K} \sum_{k=1}^{K} \mathbb{E}\left[ (\bm{\theta}_k - \bm{\theta})^\top \bm{m}_{k} \right]\\
&\leq
\frac{\mathbb{E}\left[\| \bm{\theta}_{1} - \bm{\theta}\|^2 \right]}{2 \alpha \beta_{1} K}
+
\frac{\sigma^2 \alpha}{2  \beta_{1} b}
+
\frac{G^2 \alpha}{2 \beta_1}
+
\hat{\beta}_{1}\mathrm{Dist}(\bm{\theta})
\left(
\frac{G}{\beta_1}
+
2\sqrt{\sigma^2 + G^2}
\right).
\end{split} 
\end{align}
The definition of $\bm{m}_k$ ensures that
\begin{align*}
&(\bm{\theta}_k - \bm{\theta})^\top \nabla f_{B_k}(\bm{\theta}_k)\\
&=
(\bm{\theta}_k - \bm{\theta})^\top \bm{m}_k
+ 
(\bm{\theta}_k - \bm{\theta})^\top (\nabla f_{B_k}(\bm{\theta}_k) - \bm{m}_{k-1}) 
+
(\bm{\theta}_k - \bm{\theta})^\top (\bm{m}_{k-1} - \bm{m}_{k})\\
&=
(\bm{\theta}_k - \bm{\theta})^\top \bm{m}_k
+ 
\frac{1}{\beta_{1}}(\bm{\theta}_k - \bm{\theta})^\top (\nabla f_{B_k}(\bm{\theta}_k) - \bm{m}_{k})
+
\hat{\beta}_{1} (\bm{\theta}_k - \bm{\theta})^\top (\bm{m}_{k-1} - \nabla f_{B_k}(\bm{\theta}_k)),
\end{align*}
which, together with the Cauchy--Schwarz inequality, the triangle inequality, and (A2), implies that
\begin{align*}
&(\bm{\theta}_k - \bm{\theta})^\top \nabla f_{B_k}(\bm{\theta}_k)\\
&\leq 
(\bm{\theta}_k - \bm{\theta})^\top \bm{m}_k
+ 
\frac{1}{\beta_{1}} \mathrm{Dist}(\bm{\theta}) (\|\nabla f_{B_k}(\bm{\theta}_k)\| + \|\bm{m}_{k}\|)
+ 
\hat{\beta}_{1} \mathrm{Dist}(\bm{\theta}) (\|\nabla f_{B_k}(\bm{\theta}_k)\| + \|\bm{m}_{k-1}\|).
\end{align*}
Lemma \ref{lem:bdd}, (\ref{A3}), Jensen's inequality, and $b \geq 1$ guarantee that
\begin{align}\label{M_k_1}
\begin{split}
\mathbb{E}\left[
(\bm{\theta}_k - \bm{\theta})^\top \nabla f(\bm{\theta}_k)
\right]
&\leq
\mathbb{E}\left[
(\bm{\theta}_k - \bm{\theta})^\top \bm{m}_k
\right]
+ 
2\left(\frac{1}{\beta_{1}} + \hat{\beta}_{1} \right)
\mathrm{Dist}(\bm{\theta}) \sqrt{\frac{\sigma^2}{b} + G^2}\\
&\leq
\mathbb{E}\left[
(\bm{\theta}_k - \bm{\theta})^\top \bm{m}_k
\right]
+ 
2\left(\frac{1}{\beta_{1}} + \hat{\beta}_{1} \right)
\mathrm{Dist}(\bm{\theta}) \sqrt{\sigma^2 + G^2}.
\end{split}
\end{align}
Therefore, (\ref{m_k}) ensures that, for all $K \geq 1$ and all $\bm{\theta} \in \mathbb{R}^d$, 
\begin{align*}
&\frac{1}{K} \sum_{k=1}^{K} \mathbb{E}\left[ (\bm{\theta}_k - \bm{\theta})^\top \nabla f(\bm{\theta}_k) \right]\\
&\leq
\underbrace{\frac{\mathbb{E}\left[\| \bm{\theta}_{1} - \bm{\theta}\|^2 \right]}{2 \alpha \beta_{1}}}_{C_1} \frac{1}{K}
+
\underbrace{\frac{\sigma^2 \alpha}{2  \beta_{1}}}_{C_2} \frac{1}{b} 
+
\underbrace{\frac{G^2 \alpha}{2\beta_1} 
+ \mathrm{Dist}(\bm{\theta})\left\{ \frac{G \hat{\beta}_1}{\beta_1} 
+ 2\sqrt{\sigma^2 + G^2}
\left(\frac{1}{\beta_1} + 2 \hat{\beta}_1 \right)  \right\}}_{C_3}.
\end{align*}

(ii)
A discussion similar to the one showing (ii) in Theorem \ref{thm:1} for SGD would show (ii) in Theorem \ref{thm:1} for Momentum.

(iii)
An argument similar to that which obtained (iii) in Theorem \ref{thm:1} for SGD would prove (iii) in Theorem \ref{thm:1} for Momentum.
\end{proof}

\subsection{Adam}
We show Theorem \ref{thm:1} for Adam.

\begin{proof}
(i) 
Let $\bm{\theta} \in \mathbb{R}^d$. Lemma \ref{lem:1} guarantees that for all $k\in \mathbb{N}$,
\begin{align}
\mathbb{E}\left[ (\bm{\theta}_k - \bm{\theta})^\top \bm{m}_{k-1} \right]
&= 
\underbrace{\frac{\tilde{\beta}_{1k}}{2 \alpha \beta_{1}}
\left\{
\mathbb{E}\left[\left\| \bm{\theta}_{k} - \bm{\theta} \right\|_{\mathsf{H}_k}^2 \right]
- 
\mathbb{E}\left[ \left\| \bm{\theta}_{k+1} - \bm{\theta} \right\|_{\mathsf{H}_k}^2 \right]
\right\}}_{a_k}
+ 
\underbrace{\frac{\alpha \tilde{\beta}_{1k}}{2 \beta_{1}} \mathbb{E}\left[\left\| \bm{\mathsf{d}}_k \right\|_{\mathsf{H}_k}^2 \right]}_{b_k} \nonumber \\
&\quad + 
\underbrace{\frac{\hat{\beta}_{1}}{\beta_{1}}
\mathbb{E}\left[ (\bm{\theta} - \bm{\theta}_k)^\top \nabla f (\bm{\theta}_k) \right]}_{c_k}.\label{key}
\end{align}
We define $\gamma_k := \tilde{\beta}_{1k}/(2 \beta_1 \alpha)$ $(k\in\mathbb{N})$.
Then, for all $K \geq 1$,
\begin{align}\label{LAM}
\begin{split}
\sum_{k = 1}^K a_k
&=
\gamma_1 \mathbb{E}\left[ \left\| \bm{\theta}_{1} - \bm{\theta} \right\|_{\mathsf{H}_{1}}^2\right]
+
\underbrace{
\sum_{k=2}^K \left\{
\gamma_k \mathbb{E}\left[ \left\| \bm{\theta}_{k} - \bm{\theta} \right\|_{\mathsf{H}_{k}}^2\right]
-
\gamma_{k-1} \mathbb{E}\left[ \left\| \bm{\theta}_{k} - \bm{\theta} \right\|_{\mathsf{H}_{k-1}}^2\right] 
\right\}
}_{{\Gamma}_K}\\
&\quad-
\gamma_{K} \mathbb{E} \left[ \left\| \bm{\theta}_{K+1} - \bm{\theta} \right\|_{\mathsf{H}_{K}}^2 \right].
\end{split}
\end{align}
Since $\overline{\mathsf{H}}_k \in \mathbb{S}_{++}^d$ exists such that $\mathsf{H}_k = \overline{\mathsf{H}}_k^2$, we have $\|\bm{x}\|_{\mathsf{H}_k}^2 = \| \overline{\mathsf{H}}_k \bm{x} \|^2$ for all $\bm{x}\in\mathbb{R}^d$. Accordingly, we also have 
\begin{align*}
{\Gamma}_K 
=
\mathbb{E} \left[ 
\sum_{k=2}^K 
\left\{
\gamma_{k} \left\| \overline{\mathsf{H}}_{k} (\bm{\theta}_{k} - \bm{\theta}) \right\|^2
-
\gamma_{k-1} \left\| \overline{\mathsf{H}}_{k-1} (\bm{\theta}_{k} - \bm{\theta}) \right\|^2
\right\}
\right].
\end{align*}
From $\overline{\mathsf{H}}_{k} = \mathsf{diag}(\hat{v}_{k,i}^{1/4})$, we have that, for all $\bm{x} = (x_i)_{i=1}^d \in \mathbb{R}^d$, $\| \overline{\mathsf{H}}_{k} \bm{x} \|^2 = \sum_{i=1}^d \sqrt{\hat{v}_{k,i}} x_i^2$. Hence, for all $K\geq 2$,
\begin{align}\label{DELTA}
{\Gamma}_K 
= 
\mathbb{E} \left[ 
\sum_{k=2}^K
\sum_{i=1}^d 
\left(
\gamma_{k} \sqrt{\hat{v}_{k,i}}
-
\gamma_{k-1} \sqrt{\hat{v}_{k-1,i}}
\right)
(\theta_{k,i} - \theta_i)^2
\right].
\end{align}
Condition (\ref{max_1}) and $\gamma_k \geq \gamma_{k-1}$ ($k \geq 1$) imply that, for all $k \geq 1$ and all $i\in [d]$,
\begin{align*}
\gamma_{k} \sqrt{\hat{v}_{k,i}} - \gamma_{k-1} \sqrt{\hat{v}_{k-1,i}} \geq 0.
\end{align*} 
Moreover, (A2) ensures that $D (\bm{\theta}) := \sup \{ \max_{i\in [d]} (\theta_{k,i} - \theta_i)^2 \colon k \in \mathbb{N} \} < + \infty$. Accordingly, for all $K \geq 2$,
\begin{align*}
{\Gamma}_K
\leq
D(\bm{\theta})
\mathbb{E} \left[ 
\sum_{k=2}^K
\sum_{i=1}^d 
\left(
\gamma_{k}\sqrt{\hat{v}_{k,i}} - \gamma_{k-1} \sqrt{\hat{v}_{k-1,i}}
\right)
\right]
= 
D(\bm{\theta})
\mathbb{E} \left[ 
\sum_{i=1}^d
\left(
\gamma_{K} \sqrt{\hat{v}_{K,i}}
-
\gamma_{1} \sqrt{\hat{v}_{1,i}}
\right)
\right].
\end{align*}
Let $\nabla f_{B_k}(\bm{\theta}_k) \odot \nabla f_{B_k}(\bm{\theta}_k) := (g_{k,i}^2) \in \mathbb{R}_{+}^d$. Assumption (A1) ensures that there exists $M \in \mathbb{R}$ such that, for all $k\in \mathbb{N}$, $\max_{i\in [d]} g_{k,i}^2 \leq M$. The definition of ${\bm{v}}_k$ guarantees that, for all $i\in [d]$ and all $k\in \mathbb{N}$,
\begin{align*}
v_{k,i} = \beta_{2} v_{k-1,i} + \hat{\beta}_{2} g_{k,i}^2.
\end{align*}
Induction thus ensures that, for all $i\in [d]$ and all $k\in \mathbb{N}$,
\begin{align}\label{v_M}
v_{k,i} \leq \max \{ v_{0,i}, M \} = M,
\end{align}
where $\bm{v}_0 = (v_{0,i}) = \bm{0}$. From the definition of $\hat{\bm{v}}_k$, we have that, for all $i\in [d]$ and all $k\in \mathbb{N}$,
\begin{align}\label{v_k}
\hat{v}_{k,i} = \frac{v_{k,i}}{\tilde{\beta}_{2k}}
\leq \frac{M}{\tilde{\beta}_{2k}}.
\end{align}
Therefore, (\ref{LAM}), $\mathbb{E} [\| \bm{\theta}_{1} - \bm{\theta}\|_{\mathsf{H}_{1}}^2] \leq D(\bm{\theta}) \mathbb{E} [ \sum_{i=1}^d \sqrt{\hat{v}_{1,i}}]$, and (\ref{v_k}) imply, for all $K\geq 1$,
\begin{align}\label{a_K_1_1}
\begin{split}
\sum_{k=1}^K a_k
&\leq
\gamma_{1} D(\bm{\theta}) \mathbb{E} \left[ 
\sum_{i=1}^d \sqrt{\hat{v}_{1,i}} \right]
+
D(\bm{\theta})
\mathbb{E} \left[
\sum_{i=1}^d 
\left(
\gamma_{K} \sqrt{\hat{v}_{K,i}}
-
\gamma_{1} \sqrt{\hat{v}_{1,i}}
\right)
\right]\\
&=
\gamma_{K} D(\bm{\theta})
\mathbb{E} \left[
\sum_{i=1}^d 
\sqrt{\hat{v}_{K,i}}
\right]\\
&\leq 
{\gamma}_K D(\bm{\theta}) 
\sum_{i=1}^d 
\sqrt{\frac{M}{\tilde{\beta}_{2K}}}\\
&\leq 
\frac{d D(\bm{\theta}) \sqrt{M} \tilde{\beta}_{1K}}{2 \beta_1 \alpha \sqrt{\tilde{\beta}_{2K}}}.
\end{split}
\end{align}
Inequality (\ref{a_K_1_1}) with $\tilde{\beta}_{1K} := 1 - \beta_1^{K+1}
\leq 1$ and $\tilde{\beta}_{2K} := 1 - \beta_2^{K+1} \geq 1 - \beta_2 =: \hat{\beta}_2$ implies that
\begin{align}\label{a_K_1}
\sum_{k=1}^K a_k
\leq
\frac{d D(\bm{\theta}) \sqrt{M} \tilde{\beta}_{1K}}{2 \beta_1 \alpha \sqrt{\tilde{\beta}_{2K}}}
\leq
\frac{d D(\bm{\theta}) \sqrt{M}}{2 \beta_1 \alpha \sqrt{\hat{\beta}_2}}.
\end{align}
Lemma \ref{lem:bdd} guarantees that, for all $k\in \mathbb{N}$, 
\begin{align}\label{B_k}
b_k = 
\frac{\alpha \tilde{\beta}_{1k}}{2 \beta_{1}} \mathbb{E}\left[\left\| \bm{\mathsf{d}}_k \right\|_{\mathsf{H}_k}^2 \right]
\leq
\frac{\alpha \tilde{\beta}_{1k}}{2 \beta_{1}}
\frac{\sqrt{\tilde{\beta}_{2k}}}{\tilde{\beta}_{1k}^2 \sqrt{{v}_*}} \left( \frac{\sigma^2}{b} + G^2 \right)
= 
\frac{\alpha \sqrt{\tilde{\beta}_{2k}}}{2 \sqrt{v_*} \beta_{1} \tilde{\beta}_{1k}} \left( \frac{\sigma^2}{b} + G^2 \right).
\end{align}
Inequality (\ref{B_k}) with $\tilde{\beta}_{1k} := 1 - \beta_1^{k+1} \geq 1 - \beta_1 =: \hat{\beta}_1$ and $\tilde{\beta}_{2k} := 1 - \beta_2^{k+1} \leq 1$ implies that 
\begin{align}\label{B_k_1}
b_k 
\leq
\frac{\alpha \sqrt{\tilde{\beta}_{2k}}}{2 \sqrt{v_*} \beta_{1} \tilde{\beta}_{1k}} \left( \frac{\sigma^2}{b} + G^2 \right)
\leq 
\frac{\alpha}{2 \sqrt{v_*} \beta_{1} \hat{\beta}_1} \left( \frac{\sigma^2}{b} + G^2 \right).
\end{align}
The Cauchy--Schwarz inequality and (A2) imply that, for all $k\in \mathbb{N}$,
\begin{align}\label{C_k_1}
c_k = 
\frac{\hat{\beta}_{1}}{\beta_{1}}
\mathbb{E}\left[ (\bm{\theta} - \bm{\theta}_k)^\top \nabla f (\bm{\theta}_k) \right]
\leq
\mathrm{Dist}(\bm{\theta}) G \frac{\hat{\beta}_{1}}{\beta_{1}}.
\end{align}
Hence, (\ref{key}), (\ref{a_K_1}), (\ref{B_k_1}), and (\ref{C_k_1}) ensure that, for all $K\geq 1$,
\begin{align*}
\frac{1}{K}\sum_{k=1}^K \mathbb{E}\left[ (\bm{\theta}_k - \bm{\theta})^\top \bm{m}_{k-1} \right]
\leq
\frac{d D(\bm{\theta}) \sqrt{M}}{2 \beta_1 \alpha \sqrt{\hat{\beta}_{2}}K}
+
\frac{ \alpha(\sigma^2 b^{-1} + G^2)}{2 \sqrt{v_*} \beta_{1} \hat{\beta}_1}  
+
\mathrm{Dist}(\bm{\theta}) G \frac{\hat{\beta}_{1}}{\beta_{1}}.
\end{align*}
Therefore, from (\ref{M_k}) and (\ref{M_k_1}), for all $K \geq 1$,
\begin{align*}
&\frac{1}{K}\sum_{k=1}^K \mathbb{E}\left[ (\bm{\theta}_k - \bm{\theta})^\top \nabla f(\bm{\theta}_{k}) \right]\\
&\leq
\underbrace{\frac{d D(\bm{\theta}) \sqrt{M}}{2 \alpha \beta_1 \sqrt{\hat{\beta}_2}}}_{C_1}
\frac{1}{K} 
+
\underbrace{\frac{\sigma^2 \alpha}{2 \sqrt{v_*} \beta_1 \hat{\beta}_1}}_{C_2} \frac{1}{b}
+ \underbrace{\frac{G^2 \alpha}{2 \sqrt{v_*} \beta_1 \hat{\beta}_1} 
+ \mathrm{Dist}(\bm{\theta})\left\{ \frac{G \hat{\beta}_1}{\beta_1} 
+ 2\sqrt{\sigma^2 + G^2}
\left(\frac{1}{\beta_1} + 2 \hat{\beta}_1 \right)  \right\}}_{C_3}.
\end{align*}

(ii)
A discussion similar to the one showing (ii) in Theorem \ref{thm:1} for SGD would show (ii) in Theorem \ref{thm:1} for Adam.

(iii)
An argument similar to that which obtained (iii) in Theorem \ref{thm:1} for SGD would prove (iii) in Theorem \ref{thm:1} for Adam.
\end{proof}

\newpage 

\subsection{Additional numerical results}
%\subsection{ResNet-20 on CIFAR-10}
\begin{table*}[h]
\centering
\caption{Elapsed time and training accuracy of SGD when $f(\bm{\theta}_K) \leq 10^{-1}$ for training ResNet-20 on CIFAR-10}\label{table:sgd}
\begin{tabular}{lllllllll}
\bottomrule
\multicolumn{9}{c}{SGD} \\
Batch Size 
& $2^2$ 
& $2^3$ 
& $2^4$ 
& $2^5$ 
& $2^6$ 
& $2^7$ 
& $2^8$ 
& $2^9$ \\
\hline
Time (s) 
& 16983.64 
& 9103.76 
& 6176.19 
& 3759.25 
& --- 
& --- 
& --- 
& --- \\
Accuracy (\%) 
& 96.75 
& 96.69
& 96.66 
& 96.88 
& --- 
& --- 
& --- 
& --- \\
\bottomrule
\end{tabular}
\end{table*}

\begin{table*}[th]
\centering
\caption{Elapsed time and training accuracy of Momentum when $f(\bm{\theta}_K) \leq 10^{-1}$ for training ResNet-20 on CIFAR-10}\label{table:momentum}
\begin{tabular}{lllllllll}
\bottomrule
\multicolumn{9}{c}{Momentum} \\
Batch Size 
& $2^2$ 
& $2^3$ 
& $2^4$ 
& $2^5$ 
& $2^6$ 
& $2^7$ 
& $2^8$ 
& $2^9$ \\
\hline
Time (s) 
& 7978.90 
& 3837.72 
& 2520.82 
& 1458.70 
& 887.01 
& 678.66 
& 625.10 
& 866.65 \\
Accuracy (\%) 
& 96.49 
& 96.79
& 96.51 
& 96.72 
& 96.70 
& 96.94 
& 96.94 
& 98.34 \\
\bottomrule
\end{tabular}
\end{table*}

\begin{table*}[th]
\centering
\caption{Elapsed time and training accuracy of Adam when $f(\bm{\theta}_K) \leq 10^{-1}$ for training ResNet-20 on CIFAR-10}\label{table:adam1}
\begin{tabular}{lllllllll}
\bottomrule
\multicolumn{9}{c}{Adam} \\
Batch Size 
& $2^2$ 
& $2^3$ 
& $2^4$ 
& $2^5$ 
& $2^6$ 
& $2^7$ 
& $2^8$ 
& $2^9$ \\
\hline
Time (s) 
& 10601.78 
& 4405.73
& 2410.28
& 1314.01 
& 617.14 
& 487.75 
& 281.74 
& 225.03 \\
Accuracy (\%) 
& 96.46 
& 96.38
& 96.65 
& 96.53 
& 96.43 
& 96.68 
& 96.58 
& 96.72 \\
\bottomrule
\end{tabular}
\end{table*}

\begin{table*}[ht]
\centering
\caption{Elapsed time and training accuracy of Adam when $f(\bm{\theta}_K) \leq 10^{-1}$ for training ResNet-20 on CIFAR-10}\label{table:adam2}
\begin{tabular}{llllllll}
\bottomrule
\multicolumn{8}{c}{Adam} \\
Batch Size 
& $2^{10}$ 
& $\mathbf{2^{11}}$ 
& $2^{12}$ 
& $2^{13}$ 
& $2^{14}$ 
& $2^{15}$ 
& $2^{16}$ \\
\hline
Time (s) 
& 197.78 
& 195.40
& 233.70
& 349.81 
& 691.04 
& 644.19 
& 1148.68\\
Accuracy (\%) 
& 96.74 
& 97.21
& 97.54 
& 97.75 
& 97.51 
& 99.05 
& 99.03 \\
\bottomrule
\end{tabular}
\end{table*}

%\subsection{ResNet-18 on MNIST}
\begin{table*}[th]
\centering
\caption{Elapsed time and training accuracy of SGD when $f(\bm{\theta}_K) \leq 10^{-1}$ for training ResNet-18 on MNIST}\label{table:sgd_m}
\begin{tabular}{lllllllll}
\bottomrule
\multicolumn{9}{c}{SGD} \\
Batch Size 
& $2^2$ 
& $2^3$ 
& $2^4$ 
& $2^5$ 
& $2^6$ 
& $2^7$ 
& $2^8$ 
& $2^9$ \\
\hline
Time (s) 
& 1620.98 
& 1002.89 
& 1140.47 
& 1298.52 
& --- 
& --- 
& --- 
& --- \\
Accuracy (\%) 
& 99.70 
& 99.69
& 99.70 
& 99.70 
& --- 
& --- 
& --- 
& --- \\
\bottomrule
\end{tabular}
\end{table*}

\begin{table*}[ht]
\centering
\caption{Elapsed time and training accuracy of Momentum when $f(\bm{\theta}_K) \leq 10^{-1}$ for training ResNet-18 on MNIST}\label{table:momentum_m}
\begin{tabular}{lllllllll}
\bottomrule
\multicolumn{9}{c}{Momentum} \\
Batch Size 
& $2^2$ 
& $2^3$ 
& $2^4$ 
& $2^5$ 
& $2^6$ 
& $2^7$ 
& $2^8$ 
& $2^9$ \\
\hline
Time (s) 
& 1949.66 
& 550.28 
& 298.72 
& 238.66 
& 268.98 
& 362.89 
& 567.54 
& --- \\
Accuracy (\%) 
& 99.68 
& 99.65
& 99.68 
& 99.69 
& 99.71 
& 99.70 
& 99.72 
& --- \\
\bottomrule
\end{tabular}
\end{table*}

\begin{table*}[ht]
\centering
\caption{Elapsed time and training accuracy of Adam when $f(\bm{\theta}_K) \leq 10^{-1}$ for training ResNet-18 on MNIST}\label{table:adam1_m}
\begin{tabular}{lllllllll}
\bottomrule
\multicolumn{9}{c}{Adam} \\
Batch Size 
& $2^2$ 
& $2^3$ 
& $2^4$ 
& $2^5$ 
& $2^6$ 
& $2^7$ 
& $2^8$ 
& $2^9$ \\
\hline
Time (s) 
& --- 
& 7514.43
& 614.27
& 297.97 
& 216.10 
& 160.93 
& 127.97 
& 102.98 \\
Accuracy (\%) 
& --- 
& 99.74
& 99.66 
& 99.64 
& 99.66 
& 99.66 
& 99.69 
& 99.71 \\
\bottomrule
\end{tabular}
\end{table*}

\begin{table*}[ht]
\centering
\caption{Elapsed time and training accuracy of Adam when $f(\bm{\theta}_K) \leq 10^{-1}$ for training ResNet-18 on MNIST}\label{table:adam2_m}
\begin{tabular}{llllll}
\bottomrule
\multicolumn{6}{c}{Adam} \\
Batch Size 
& $\mathbf{2^{10}}$ 
& $2^{11}$ 
& $2^{12}$ 
& $2^{13}$ 
& $2^{14}$  \\
\hline
Time (s) 
& 93.53 
& 100.22
& 129.29
& 217.23 
& 375.87\\
Accuracy (\%) 
& 99.68 
& 99.67
& 99.67 
& 99.70 
& 99.69 \\
\bottomrule
\end{tabular}
\end{table*}

\end{document}